\newcommand{\RR}{{\mathbb{R}}}
\newcommand{\NN}{{\mathbb{N}}}
\newcommand{\CC}{{\mathbb{C}}}
\def\bA{{\mathbf A}}
\def\bB{{\mathbf B}}
\def\bC{{\mathbf C}}
\def\bD{{\mathbf D}}
\def\bE{{\mathbf E}}
\def\bF{{\mathbf F}}
\def\bH{{\mathbf H}}
\def\bI{{\mathbf I}}
\def\bL{{\mathbf L}}
\def\bM{{\mathbf M}}
\def\bQ{{\mathbf Q}}
\def\bR{{\mathbf R}}
\def\bT{{\mathbf T}}
\def\bU{{\mathbf U}}
\def\bW{{\mathbf W}}
\def\bZ{{\mathbf Z}}
\def\bc{{\mathbf c}}
\def\be{{\mathbf e}}
\def\br{{\mathbf r}}
\def\bu{{\mathbf u}}
\def\bw{{\mathbf w}}
\def\bx{{\mathbf x}}
\def\bz{{\widetilde{\mathbf z}}}
\def\balpha{{\boldsymbol{\alpha}}}
\def\bbeta{{\boldsymbol{\beta}}}
\def\bSigma{{\boldsymbol{\Sigma}}}
\newcommand{\one}{{\mathbf{1}}}
\def\tA{{\widetilde {\mathbf A}}}
\def\tB{{\widetilde {\bf B}}}
\def\tD{{\widetilde \bD}}
\def\tG{{\widetilde G}}
\def\tL{{\widetilde \bL}}
\def\tZ{{\widetilde {\mathbf Z}}}
\def\ts{{\widetilde {\bf s}}}
\def\tu{{\widetilde u}}
\def\tbu{{\widetilde \bu}}
\def\tlambda{{\widetilde \lambda}}
\def\tr{{\widetilde {\bf r}}}
\def\K{{\cal K}}
\def\hL{{\cal L}}
\def\cN{{\cal N}}
\newtheorem{remark}[theorem]{Remark}
\newtheorem{assumption}[theorem]{Assumption}
\newtheorem{algorithm}[theorem] {Algorithm}
\newtheorem{prop}[theorem] {Proposition}
\newtheorem{cor}[theorem]{Corollary 
}
\begin{document}
\bibliographystyle{plain}

\pagestyle{myheadings}

%===============================================================================

\title{Distance preserving model order reduction of graph-Laplacians and cluster analysis}
\author{
Vladimir Druskin\footnotemark[1]
\and
Alexander V. Mamonov\footnotemark[2]
\and
Mikhail Zaslavsky\footnotemark[3]
}

\renewcommand{\thefootnote}{\fnsymbol{footnote}}

\footnotetext[1]{Worcester Polytechnic Institute, Department of Mathematical Sciences,
Stratton Hall,
100 Institute Road, Worcester MA, 01609 (vdruskin1@gmail.com)}
\footnotetext[2]{Department of Mathematics, University of Houston, 
3551 Cullen Blvd., Houston, TX 77204-3008 (mamonov@math.uh.edu)}
\footnotetext[3]{Schlumberger-Doll Research Center, 1 Hampshire St., 
Cambridge, MA 02139-1578 (mzaslavsky@slb.com)}
 
\maketitle

\begin{abstract} 
Graph-Laplacians and their spectral embeddings play an important role in multiple 
areas of machine learning. This paper is focused on graph-Laplacian dimension reduction 
for the spectral clustering of data as a primary application, however, it can also be applied 
in data mining, data manifold learning, etc. 
Spectral embedding provides a low-dimensional parametrization of the data manifold 
which makes the subsequent task (e.g., clustering with k-means or any of its 
approximations) much easier. However, despite reducing the dimensionality of data, 
the overall computational cost may still be prohibitive for large data sets due to two factors. 
First, computing the partial eigendecomposition of the graph-Laplacian typically requires 
a large Krylov subspace. 
Second, after the spectral embedding is complete, one still has to operate 
with the same number of data points, which may ruin the efficiency of the approach. 
For example, clustering of the embedded data is typically performed with various 
relaxations of k-means which computational cost scales poorly with respect to the 
size of data set. Also, they become prone to getting stuck in local minima, so their 
robustness depends on the choice of initial guess.
In this work, we switch the focus from the entire data set to a subset of graph vertices 
(target subset). We develop two novel algorithms for such low-dimensional representation 
of the original graph that preserves important global distances between the nodes of the 
target subset. In particular, it allows to ensure that target subset clustering is consistent 
with the spectral clustering of the full data set if one would perform such. 
That is achieved by a properly parametrized reduced-order model (ROM) of the 
graph-Laplacian that approximates accurately the diffusion transfer function of the original 
graph for inputs and outputs restricted to the target subset.
Working with a small target subset reduces greatly the required dimension of Krylov 
subspace and allows to exploit the conventional algorithms (like approximations of k-means) 
in the regimes when they are most robust and efficient. This was verified in the numerical 
clustering experiments with both synthetic and real data. We also note that our ROM approach 
can be applied in a purely transfer-function-data-driven way, so it becomes the only feasible 
option for extremely large graphs that are not directly accessible.
There are several uses for our algorithms.
First, they can be employed on their own for representative subset clustering in cases 
when handling the full graph is either infeasible or simply not required. 
Second, they may be used for quality control. 
Third, as they drastically reduce the problem size, they enable the application
of more sophisticated algorithms for the task under consideration 
(like more powerful approximations of k-means based on semi-definite
programming (SDP) instead of the conventional Lloyd's algorithm).
Finally, they can be used as building blocks of a multi-level divide-and-conquer type 
algorithm to handle the full graph. The latter will be reported in a separate article.
\end{abstract}

%%%%%%%%%%%%%%%%%%%%%%%%%%%%%%%%%%%%%%%%%%%%%%%%%%%
\section{Introduction}
\label{sec:intro}

Embedding via graph-Laplacian eigenmaps (a.k.a. spectral embedding) is employed heavily in  
unsupervised machine learning and data science. Its power comes from the dimensionality 
reduction property, i.e., unveiling the low-dimensional manifold structure within the data.
Subsequently, this structure can be exploited efficiently by clustering algorithms.
Such strategy is commonly known as spectral clustering, e.g., 
\cite{Shi1997NormalizedCA,  Ng2001OnSC,Belkin2003LaplacianEF}.
On the downside, spectral clustering becomes computationally expensive for large graphs, 
due to the cost of computing the spectral data. The use of combinatorial clustering algorithms,
such as the true k-means (known to be NP-hard), on the spectral data can further increase 
the cost, as typically those do not scale well with graph size. 
Alternatively, one may apply heuristic approximations of k-means such as Lloyd's algorithm 
to the spectral data. In this case robustness may become an issue since the heuristics are
more likely to get stuck in local minima for larger graphs. 
Here we should also point out a recent promising approach that substitutes k-means 
by a direct algorithm \cite{Damle2016RobustAE}.

Spectral embedding reduces the dimension of the data set space, but typically does not change the 
number of data points. In principle, reducing both can be achieved via graph contraction and model 
order reduction methods, see, e.g., \cite{Cheng2016GraphSM}. One can view the 
graph-Laplacian as a second order finite-difference discretization of Laplace-Beltrami operator 
that governs the diffusion on a Riemannian data manifold. The graph contraction is conceptually 
similar to multigrid coarsening, as it constructs a coarse grain graph-Laplacian based on local information. 
As such, it can be at most a low order 
approximation of the original full-scale graph-Laplacian with respect to the maximum edge weight.
On the other hand, well developed tools of model order reduction for linear time-invariant (LTI) 
dynamical systems produce exponential convergence with respect to the order of the reduced 
order model (ROM), see, e.g., \cite{Antoulas01asurvey, Saad2018, BAI20029}. 
We note that model order reduction was already successfully employed for finding 
nearest local clusters \cite{2017-ecml-pkdd}, PageRank computations \cite{2015-edgeppr} 
and dynamics of quantum graphs \cite{doi:10.1093/imanum/drx029}.

With clustering application in mind, the objective of this work is to develop an efficient and 
accurate partitioning of an a priori prescribed \emph{arbitrary} subset of vertices of the full graph,
the so-called target subset. 
While a major goal is to avoid clustering the full graph, the resulting target subset partitioning 
must nevertheless be consistent with full graph spectral clustering that one may perform. 
That is, the partitioning must separate the vertices of the target subset that would be placed
into different clusters by the full graph spectral clustering algorithm.
Reduced-order model of graph-Laplacian allows to achieve that by taking into account the structure
of the whole graph. The main motivation behind the proposed approach is to ultimately develop 
a multi-level divide-and-conquer algorithm for full graph clustering. Obviously, the method 
proposed here is a crucial building block of such an algorithm. Full graph clustering algorithm 
based on subset clustering presented here is a topic of current research and will be presented in 
a separate article. 

We note that the smallest eigenmodes used in spectral embedding correspond to late-time 
asymptotics of diffusive LTI dynamical system with the graph-Laplacian as the system matrix. 
A great success of model reduction for diffusive problems is that a multi-input/multi-output (MIMO) 
transfer function of an LTI dynamical system with moderate sizes of input and output arrays 
(here dictated by the size of the target subset) can be well approximated by a comparably 
small numbers of state variables.

In this work we employ reduced models of square MIMO LTI systems, e.g., \cite{Antoulas01asurvey, Reichel_2016}. 
These are the systems in which the input and output functionals are the same. In our case the 
input/output is defined by the indicator vectors of the vertices in the prescribed target subset. 
To obtain an accurate approximation of the diffusive LTI system response at the target subset, 
the ROM needs to have additional, ``interior'' degrees of freedom that account for diffusion 
in the rest of the graph outside the target subset. These interior degrees of freedom 
are obtained by subsequently projecting the full graph Laplacian on certain Krylov subspaces which 
is performed via two-stage block-Lanczos algorithm, as in \cite{druskin2017multiscale}.

After the ROM is constructed, in order to cluster (partition) the target subset, 
the interior degrees of freedom must be sampled. Although the clustering of internal 
degrees of freedom is an auxiliary construction that is discarded, it is essential to 
accounting for the structure of the full graph. Here we consider two algorithms for 
sampling the interior degrees of freedom. The first algorithm samples the Ritz vectors 
computed from the ROM. The second algorithm transforms the ROM realization to a 
symmetric semi-definite block-tridiagonal matrix with zero sum rows. This matrix mimics a 
finite-difference discretization of an elliptic PDE operator with Neumann boundary conditions. 
It can be viewed as a multi-dimensional manifold generalization of the so-called 
``finite-difference Gaussian quadrature rule'' or simply the optimal grid \cite{druskin1999gaussian} 
embedding the ROM realization to the space of the full graph-Laplacian. 
The interior degrees of freedom then correspond to the sampling of the data manifold at the 
nodes of that grid. Once the interior degrees of freedom are sampled, 
any conventional clustering algorithm can be applied.

The main computational cost of the proposed approach is ROM construction via Krylov subspace 
projection. However, due to adaptivity to the target subset, it requires a smaller Krylov subspace 
dimension compared to that used by the partial eigensolvers in conventional spectral 
clustering algorithms. The subspace dimension reduction is especially pronounced for graphs with a 
large number of connected components. The significant dimensionality reduction and the regular sparse 
(deflated block-tridiagonal) structure of ROM realization matrix leads to more efficient and robust 
performance of approximate k-means algorithms. Moreover, the reduced dimensionality enables the use of 
more accurate approximations of k-means like those based on semi-definite programming (SDP)
\cite{Damle2016RobustAE,peng2007approximating,7421303} that are otherwise computationally 
infeasible.

The paper is organized as follows. We introduce the target subset along with
some notation in Section~\ref{sec:problem}. In Section \ref{sec:ROGL} we first construct ROM 
as the Pad\'e approximation of the frequency domain MIMO transfer function on the target subset, 
outline a projection-based approach to its computation (with a detailed algorithm via Krylov 
subspace projection in given Appendix~\ref{app:twostage}) and derive ROM realization via 
reduced order graph-Laplacian (ROGL) of deflated block-tridiagonal structure. At the end of 
the section we show that ROM preserves commute-time and diffusion distances between 
the nodes of the target subset (with proof in Appendix~\ref{app:distpres}). 
In Section \ref{sec:clustappl} we discuss clustering application and introduce two novel ROM-based algorithms 
for performing this task. In particular, the Ritz vectors and a clustering algorithm based 
on their sampling is given in Section~\ref{sub:ritzclust}, while the algorithm based on the 
ROGL is presented in Section~\ref{sub:romgraph}. In Section~\ref{sec:numerical} we discuss
the results of the numerical experiments and examine the quality of clustering for synthetic 
and real-world data sets. We conclude with the summary and discussion of further research 
directions in Section~\ref{sec:summary}. In Appendix~\ref{app:interpret} we discuss an 
interpretation of the ROGL in terms of finite-difference Gaussian rules. 
In Appendix~\ref{app:dblanczos} we present the details of the deflated block-Lanczos 
procedure, the main building block of our model reduction algorithms. 

%==================================================================================================
\section{Problem formulation and notation}
\label{sec:problem}

Both the definition and the construction of the ROGL are \emph{targeted}, i.e., 
the ROGL will be computed to best approximate certain diffusion response of a graph 
on a specifically chosen subset of its vertices. To that end, 
consider a graph with vertex set $G = \{ 1, 2, \ldots, N \}$. To construct the ROGL 
we choose a \emph{target subset} $G_m \subset G$ of graph's vertices, where
\begin{equation}
G_m = \{ i_1, i_2, \ldots, i_m \},
\end{equation} 
with the case of most interest being $m \ll N$.

The connectivity of the graph and the weights associated with graph edges are encoded in the 
\emph{graph-Laplacian} matrix $\bL \in \RR^{N \times N}$ that has the following properties: 
$\bL$ is symmetric positive-definite, the sum of entries in every of its rows is zero, and the
off-diagonal entries are non-positive. 
Note that while the latter condition could be relaxed, i.e., ROGL can be constructed 
for graphs with negative edge weights, the former two properties are essential.
Hereafter we denote matrices with bold uppercase letters
and vectors are bold lowercase, while the individual entries of matrices and vectors are
non-bold uppercase and lowercase letters, respectively.

Widely used generalizations are normalized and symmetric normalized  graph-Laplacians, 
defined respectively as $\bD^{-1} \bL$ and $\bA=\bD^{-1/2} \bL \bD^{-1/2}$, 
where $\bD\in \RR^{N\times N}$ is a positive-definite diagonal matrix.

A particular case of  $\bD$ containing
the diagonal part of $\bL$:
\begin{equation}
\label{eqn:randwalkdiag}
\bD_{RW} = \mbox{diag}(L_{11}, L_{22}, \ldots, L_{NN})
\end{equation}
relates graph-Laplacians to random walks on graph and Markov chains \cite{Chung:1997}.
It allows us to define the \emph{random-walk normalized graph-Laplacian} 
\begin{equation}\label{eqn:randwalknorm}
\bL_{RW} = \bD_{RW}^{-1} \bL,
\end{equation}
and the \emph{random walk normalized symmetric graph-Laplacian}
\begin{equation}
\bA_{RW} = \bD_{RW}^{-1/2} \bL \bD_{RW}^{-1/2}.
\label{eqn:defA}
\end{equation}

We also need the following quantities associated with the target subset. First,
denote by $\be_j \in \RR^N$ the j$^{th}$ column of the $N \times N$ identity matrix. 
Then, the columns of
\begin{equation}
\bB = [\be_{i_1}, \ldots, \be_{i_m}] \in \RR^{N \times m},
\label{eqn:bmat}
\end{equation}
are the ``indicator vectors'' of the vertices in the target subset $G_m$.
Likewise, we can also define $\bE_1 = [\be_1, \ldots, \be_m] \in \RR^{n \times m}$,
where in a slight abuse of notation $\be_j$, $j=1,\ldots,m$, are the first  $m$ columns of the $n \times n$ 
identity matrix, for some $n > m$.

%==================================================================================================
\section{Reduced order graph-Laplacian}
\label{sec:ROGL}

In this section we present the definition, numerical construction and the properties of the ROGL.

Given the target subset $G_m$ we can consider the diffusion process with both
inputs and outputs (sources and receivers) restricted to it. One way to characterize
the response of such process is via the discrete-time diffusion transfer function
\begin{equation}
\bF(p) = \bB^T\bD (\bI - \tau \bD^{-1}\bL )^p \bB = 
\bB^T\bD^{1/2} (\bI - \tau\bA )^p\bD^{1/2} \bB \in \RR^{m \times m}, \quad p=1,2,\ldots
\label{eqn:transfer}
\end{equation}
that has both its inputs and outputs in the target subset $G_m$ and $p$ is the 
discrete time with 
\begin{equation}
\label{eqn:cfl}
\tau\le \frac {2}{\|\bA\|}.
\end{equation}
Normally $\tau$ is chosen as the upper bound of (\ref{eqn:cfl}).  
An important property of the random walk graph Laplacians is  that  $\|\bA_{RW}\|\le 2$, 
so for  that case  (\ref{eqn:transfer}) can be stably rewritten with $\tau=1$ as
\begin{equation}
\bF(p) = \bB^T\bD_{RW} (\bI -  \bL_{RW} )^p \bB , \quad p=1,2,\ldots.
\label{eqn:transferRW}
\end{equation}
It is easy to see, that matrix $\bI -  \bL_{RW}$ is the matrix of transitional probabilities, 
so evolution of the transfer function given by (\ref{eqn:transferRW}) is defined by the corresponding 
Markov chain and the rescaled components of the matrix $\bF(p)$ are directly related to  
the diffusion time distance (for details see \cite{Coifman_Lafon_2006}, 
see also Section~\ref{sub:ctdiffdist}).

To define the ROGL we follow the conventional reasoning of model order reduction. 
That is, we build a small proxy model for (\ref{eqn:transfer}) of the same form
\begin{equation}
\widetilde{\bF}(p) = \tB^T \tD (\bI - \tau \tD^{-1}\tL )^p \tB \in \RR^{m \times m}, 
\quad p=1,2,\ldots
\label{eqn:FROM}
\end{equation}
so that the reduced-order transfer function (\ref{eqn:FROM}) approximates the original one, i.e.,
\begin{equation}
\widetilde{\bF}(p)\approx {\bF}(p)\label{eqn:approx}.\end{equation} 
Here $\tD^{-1},  \tL \in \RR^{n \times n}$ and $\tB\in \RR^{n \times m}$ 
{with $n \ll N$}. The ROGL $\tL$ is constructed so that it governs the diffusion LTI 
system with MIMO transfer function given by $\widetilde{\bF}(p)$.
Note that it is also possible to construct ROGL enforcing continuum-time diffusion
transfer function approximation instead of (\ref{eqn:FROM}). However, discrete-time formulation is 
more consistent with the random walk, as it proceeds one graph vertex at a time.

While enforcing an accurate approximation (\ref{eqn:approx}) is a common practice in 
model order reduction, it must be justified in the context of data science applications, 
e.g., clustering. To that end we observe that the entries of $\bF(p)$ are proportional to the 
components of diffusion distances on $G$ between the corresponding nodes of the 
target subset $G_m$, see, e.g., \cite{Coifman_Lafon_2006}. Thus, (\ref{eqn:approx}) 
implies that the ROGL respects those distances, i.e., it approximates the random walk 
between the nodes of the target set for the case of a random walk graph Laplacian.
Below we construct the ROGL satisfying the outlined above properties that yields 
an approximation (\ref{eqn:approx}) that is exponentially accurate in $n$.

%==================================================================================================
\subsection{ROM via rational approximation}

Let $\left( \lambda_j\in\RR;\bu_j\in\RR^N \right)^N_{j=1}$ be the eigenpairs of 
$\bA = \bD^{-1/2} \bL \bD^{-1/2}$, i.e., 
\begin{equation}
\bA \bu_j=\lambda_j\bu_j, \quad \bu_j^T\bu_j=1, \quad j=1,\ldots,N.
\end{equation}
Hence, using the spectral decomposition 
\begin{equation}
\bD^{-1/2} \bA \bu_j = \bD^{-1}\bL\bD^{-1/2}\bu_j=\lambda_j\bD^{-1/2}\bu_j
\end{equation}
of the normalized graph-Laplacian $\bD^{-1}\bL$, we obtain
\begin{equation}
{\bF}(p) = \sum_{j=1}^N(1 - \tau \lambda_j)^p\br_j \br_j^T  , \quad p=1,2,\ldots,
\label{eqn:FROMsd}
\end{equation}
where $\br_j = \bB^T \bD^{1/2} \bu_j$.

Likewise, let $\left( \tlambda_j \in \RR; \tbu_j \in \RR^n \right)^n_{j=1}$ 
be the eigenpairs of $\tD^{-1/2} \tL \tD^{-1/2}$, i.e., 
\begin{equation}
\tD^{-1/2} \tL \tD^{-1/2} \tbu_j = \tlambda_j \tbu_j, \quad \tbu_j^T \tbu_j=1, \quad j=1,\ldots,n,
\end{equation}
Using this eigendecomposition we obtain
\begin{equation}
\widetilde{\bF}(p) = \sum_{j=1}^n(1 - \tau \tlambda_j)^p \tr_j \tr_j^T  , \quad p=1,2,\ldots,
\label{eqn:FROMsdROM}
\end{equation}
where $\tr_j = \tB^T \tD^{1/2} \tbu_j$.

Reduced order models are typically computed in the Laplace domain,
where the transfer function and its reduced order model can be respectively 
written in matrix partial fraction form as
\begin{equation}
\hL{\bF}(\lambda) = \bB^T\bD (\lambda\bI -  \bD^{-1}\bL )^{-1} \bB = 
\sum_{j=1}^N \frac{\br_j \br_j^T}{ \lambda-\lambda_j}
\label{eqn:FROMsdL}
\end{equation}
and
\begin{equation}
\hL\widetilde{\bF}(\lambda) = 
\sum_{j=1}^n\frac{\tr_j \tr_j^T}{ \lambda-\tlambda_j}.
\label{eqn:FROMsdLROM}
\end{equation}
In this framework, ROM $\hL\widetilde{\bF}(\lambda)$ can be considered as a 
rational approximation of $\hL{\bF}(\lambda)$. 

The multiplicity of the pole $\lambda=0$ of ${\hL \bF}$ provides critical information 
about the connectivity of the target set, i.e., this number (obviously not smaller than one) 
is equal to the number of connected components supported on $G_m$. 
So, the ROM must capture the restriction of the nullspace onto these components.
Thus, we require that the following holds.

\begin{assumption}
The multiplicity $m_0$ of the pole $\lambda=0$ (the number of $\tlambda_j=0$) of the 
ROM $\widetilde{\hL \bF}$ is the same as the multiplicity of the pole $\lambda=0$ 
of the exact transfer function ${\hL \bF}$ (the number of $\lambda_j=0$).
\label{ass:null}
\end{assumption}

There is a great variety of efficient model order reduction algorithms for LTI dynamical 
systems, that  produce exponential convergence with respect to the ROM's order, 
e.g., see \cite{Antoulas01asurvey, Saad2018, BAI20029,BEATTIE20155}. 

One such approach is to compute the ROM in a data-driven manner by directly approximating 
the transfer function. This approach is also known as non-invasive, i.e., it does not require an 
explicit knowledge of exact operator $\tD^{-1}\tL$ outside the target subset \cite{BEATTIE20155}. 
An alternative approach is based on projection of $\tD^{-1}\tL$ onto appropriately chosen 
subspaces, e.g., Krylov or rational Krylov subspaces. Often these two approaches produce 
algebraically equivalent results due to Pad'e-Lanczos connection \cite{BAI20029}. 
The data-driven approach can be the only feasible option when the entire graph is not available 
or it is too big to handle (in which case the transfer function can still be computed via a 
Monte-Carlo algorithm), and the projection approach is preferable for middle size graphs.

%==================================================================================================
\subsection{Matrix Pad\'e approximation via Krylov subspace projection}

To construct the interpolant (\ref{eqn:FROMsdROM})
we match the first $m_0$ terms of (\ref{eqn:FROMsd}) exactly, 
as required by the Assumption~\ref{ass:null}, while the remainder 
(with $\lambda_j>0$ for $j>m_0$) is computed via Pad\'e approximation at $0$, i.e.,
\begin{eqnarray}
\tlambda_j= \lambda_j = 0, \ \tr_j=\br_j, \qquad j=1,\ldots,m_0, & \nonumber \\
\frac{d^i}{d\lambda^i}
\left. \left( \sum_{j=m_0+1}^n \frac{\tr_j \tr_j^T}{ \lambda - \tlambda_j} - 
\sum_{j=m_0+1}^{N} \frac{\br_j \br_j^T}{ \lambda - \lambda_j}\right)\right|_{\lambda=0}=0, & 
\label{eqn:pade} \\
i=0,\ldots, 2\frac{n-m_0}{m}-1, & \nonumber
\end{eqnarray}
requiring  $n-m_0=k_2m$ with an integer $k_2$. It is  known, that Pad\'e approximations 
of matrix-valued Stieltjes functions (with $O(\lambda^{-1})$ decay at infinity) are also 
Stieltjes matrix-valued functions \cite{dyukarev2004indeterminacy} with the measure 
within support of the exact functions, so $\tr_j$ are real and $\tlambda_j$ are real positive 
for $j>m_0$. It is also known that such approximations converge exponentially on 
$\CC\setminus \RR_+$ (uniformly with respect to the number of points of increase of Stieltjes 
measure $N$) with the best convergence rate of  the approximant as well as its spectral 
measure near the origin \cite{Baker1996PadAS}. This leads to uniform (with respect to $N$) 
convergence of the time-domain solution (\ref{eqn:FROMsdROM}) with the best rates for 
large diffusion times. 

Spectral embedding is normally performed for the lower part of the spectral interval 
(corresponding to large diffusion times), which is the main reason for choosing the matching 
point at $\lambda=0$.\footnote{In future work we plan to investigate the use of more powerful 
tangential multi-point Pa\'de approximations, e. g., see \cite{BEATTIE20155,druskin2014adaptive}.}  
We also note that the computation of (\ref{eqn:pade}) can be done both via data-driven 
and projection approaches. 

Recall that $m_0$ is also the number of connected components of $G$ having non-empty 
intersection with $G_m$. Denote by $\bZ \in \RR^{N \times m_0}$ the matrix with columns 
forming an orthonormal basis for the nullspace of $\bA$ within these components: 
\begin{equation}
\bA \bZ=0, \quad \bZ^T\bZ = \bI, \quad \mbox{rank}(\bB^T\bZ) = m_0.
\end{equation} 
Then, denote $\bB_\perp = (\bI-\bZ\bZ^T) \bB$ and let the columns of 
$\bQ \in \RR^{N\times (n-m_0)}$ be an orthonormal basis for the block Krylov subspace 
\begin{equation}
\K_{k_2}[\bA^{-1}, \bB_\perp]=
\mbox{colspan} \{ \bB_\perp, \bA^{-1} \bB_\perp, \ldots, \bA^{-k_2+1} \bB_\perp \}.
\end{equation} 
We should note that  
\begin{equation}
n - m_0 = \mbox{dim}\{ \K_{k_2}[\bA^{-1}, \bB_\perp] \}
\end{equation}
can be smaller than the number of columns $mk_2$ {of $\bQ$} due to irregular graph structure 
(small connected components, etc.), i.e., in general it satisfies the non-strict inequality
\begin{equation}
n-m_0 \le mk_2\label{eqn:deflation}.
\end{equation} 
The following proposition follows from a well-known connection between Pad\'e  
interpolation and Krylov subspace projection, a.k.a. Pad\'e-Lanczos connection, 
e.g., \cite {BAI20029}.
\begin{prop}\label{prop:LanczPade}
Let $\tA_{12}=\bQ_{12}^T\bA\bQ_{12}$, $\bQ_{12}= [\bQ, \bZ] \in \RR^{N\times n}$. 
Then $\hL \widetilde{\bF}(\lambda)$ from (\ref{eqn:pade}) can be equivalently obtained as
\begin{equation}
\hL\widetilde{\bF}(\lambda) = 
(\bQ_{12}^T\bD^{1/2}\bB)^T(\lambda \bI - \tA_{12})^{-1}\bQ_{12}^T\bD^{1/2}\bB.
\label{eqn:ROMPadeProject}
\end{equation}
\end{prop}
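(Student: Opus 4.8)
The plan is to reduce the statement to the standard Padé--Lanczos correspondence for matrix-valued Stieltjes functions, applied block-wise to the portion of the transfer function away from the pole at $\lambda=0$. First I would split both $\hL\bF$ and the projected function \nref{eqn:ROMPadeProject} into a nullspace part and a complementary ``interior'' part, using the orthogonal decomposition $\RR^N = \mbox{colspan}(\bZ) \oplus \mbox{colspan}(\bZ)^\perp$. Since $\bA\bZ = 0$ and $\bZ^T\bZ = \bI$, the block $\bZ^T\bA\bQ$ vanishes (the columns of $\bQ$ lie in $\mbox{colspan}(\bB_\perp) + \bA^{-1}(\cdots) \subset \mbox{colspan}(\bZ)^\perp$, which is $\bA$-invariant), so $\tA_{12}$ is block-diagonal: the $\bZ$-block is the $m_0\times m_0$ zero matrix, and the $\bQ$-block is $\tA_{22} = \bQ^T\bA\bQ$. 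Consequently the right-hand side of \nref{eqn:ROMPadeProject} decomposes as
\begin{equation}
\frac{(\bB^T\bD^{1/2}\bZ)(\bB^T\bD^{1/2}\bZ)^T}{\lambda} + (\bQ^T\bD^{1/2}\bB)^T(\lambda\bI - \tA_{22})^{-1}\bQ^T\bD^{1/2}\bB.
\label{eqn:proofsplit}
\end{equation}
The first term matches the $m_0$-fold pole at $0$ of $\hL\bF$ exactly, because $\bZ\bZ^T$ is the spectral projector onto the nullspace and $\sum_{j\le m_0}\br_j\br_j^T = \bB^T\bD^{1/2}\bZ\bZ^T\bD^{1/2}\bB$; this takes care of the first line of \nref{eqn:pade} and is consistent with Assumption~\ref{ass:null}.

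Next I would identify the second term of \nref{eqn:proofsplit} with the $(n-m_0)$-term Padé interpolant appearing in \nref{eqn:pade}. Here the key observation is that the residue $\bB_\perp = (\bI - \bZ\bZ^T)\bB$ and the restriction $\bA_\perp$ of $\bA$ to $\mbox{colspan}(\bZ)^\perp$ are precisely the data of the reduced Stieltjes function $\sum_{j>m_0}\br_j\br_j^T/(\lambda-\lambda_j) = \bB_\perp^T\bD^{1/2}(\lambda\bI - \bD^{-1}\bL)^{-1}\bD^{1/2}\bB_\perp$ restricted to the positive-eigenvalue subspace (care is needed with the $\bD^{1/2}$ weighting; one works with $\bA$ and the vectors $\bD^{1/2}\bB$ throughout to keep the symmetric formulation). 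By construction $\mbox{colspan}(\bQ) = \K_{k_2}[\bA^{-1},\bB_\perp]$, so projecting $\bA_\perp^{-1}$ onto this block Krylov subspace yields a matrix $\tA_{22}$ whose associated transfer function $(\bQ^T\bD^{1/2}\bB)^T(\lambda\bI-\tA_{22})^{-1}\bQ^T\bD^{1/2}\bB$ interpolates the reduced Stieltjes function to the order $2k_2$ in the moments of $\bA_\perp^{-1}$, i.e.\ its Taylor expansion at $\lambda=0$ matches through degree $i = 0,\ldots,2k_2-1 = 2(n-m_0)/m - 1$. This is exactly the classical block Padé--Lanczos statement (as in \cite{BAI20029}): the block-Lanczos/Krylov projection of a symmetric matrix reproduces twice as many matched moments as the Krylov dimension; the deflation in \nref{eqn:deflation} only means some moments are matched ``for free'' by rank drop, which does not spoil the count.

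The main obstacle I anticipate is bookkeeping rather than conceptual: (i) verifying carefully that the $\bZ$--$\bQ$ cross block of $\tA_{12}$ really vanishes, which rests on $\mbox{colspan}(\bQ)$ being contained in the $\bA$-invariant orthogonal complement of the nullspace — this uses that $\bB_\perp$ already lies in that complement and that $\bA^{-1}$ preserves it; (ii) tracking the $\bD^{1/2}$ similarity transform consistently so that ``moments of $\bA^{-1}$'' translate into derivatives of $\hL\bF$ at $\lambda=0$ with the correct residue vectors $\br_j$; and (iii) confirming that matching moments of the \emph{reduced} function $\sum_{j>m_0}$ plus exact reproduction of the $\sum_{j\le m_0}$ part is equivalent to the single interpolation condition \nref{eqn:pade} as written, i.e.\ that no cross-terms between the two parts arise in the derivatives at $0$ — which is immediate since the pole at $0$ contributes only to the $\lambda^{-1}$ coefficient while \nref{eqn:pade} differentiates the regular ($i\ge 0$) part. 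Once these three points are checked, the uniqueness of the matrix Padé approximant of a Stieltjes function (already invoked in the text via \cite{dyukarev2004indeterminacy}) forces \nref{eqn:ROMPadeProject} to coincide with \nref{eqn:pade}, completing the proof.
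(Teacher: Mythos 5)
Your argument is correct and is essentially the paper's own justification made explicit: the paper offers no proof beyond citing the Pad\'e--Lanczos connection, and your decomposition into the exactly reproduced nullspace block plus the classical $2k_2$-moment matching of the Galerkin projection onto $\K_{k_2}[\bA^{-1},\cdot]$ is precisely the standard argument behind that citation. Your parenthetical caution about the $\bD^{1/2}$ weighting is well placed --- for the moment matching to hold for the weighted residues $\br_j=\bB^T\bD^{1/2}\bu_j$ the Krylov block must be seeded with $(\bI-\bZ\bZ^T)\bD^{1/2}\bB$ rather than the literal $\bB_\perp=(\bI-\bZ\bZ^T)\bB$ of the text.
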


Let  $\left( \tlambda_j\in\RR;\ts_j\in\RR^n \right)^n_{j=1}$ be the eigenpairs of $\tA_{12}$. 
By construction of the projection subspace and min-max propertiues of Ritz values, 
$\tlambda _j=0$ for $j\le m_0$ and positive otherwise.  
Then (\ref{eqn:ROMPadeProject}) can be written explicitly in the form (\ref{eqn:pade}) with 
\begin{equation}
\tr_j = \bB^T\bD^{1/2}\bw_j,
\end{equation}
where $\bw_j = \bQ_{12} \ts_j \in \RR^N$ are the Ritz vectors on the column space of $\bQ_{12}$.
With the help of these vectors the state space ROM can be written in the Laplace 
and discrete time domains as
\begin{equation}
\bD(\lambda \bI - \bD^{-1}\bL )^{-1} \bB \approx
\sum_{j=1}^{m_0} \frac{\bD^{1/2}\bw_j \tr_j^T}{\lambda} + 
\sum_{j=m_0+1}^n \frac{\bD^{1/2}\bw_j \tr_j^T}{ \lambda-\tlambda_j},
\end{equation}
and 
\begin{equation}
\bD(\bI - \tau \bD^{-1}\bL )^p \bB \approx 
\sum_{j=1}^{m_0} \bD^{1/2}\bw_j \tr_j^T + 
\sum_{j=m_0+1}^n (1 - \tau \tlambda_j )^p \bD^{1/2}\bw_j \tr_j^T,
\label{eqn:state}
\end{equation}
respectively.
Again, due to min-max properties of Ritz values, (\ref{eqn:state}) is stable. 
It provides an embedding of the ROM into the state space that can further be used in 
one of our clustering algorithms. The algorithm for computation of 
$\tr_j, \tlambda_j$ and $\bw_j$ is given in Appendix~\ref{app:twostage}.

%==================================================================================================
\subsection{Transformation to the graph-Laplacian form}
\label{sub:transgraph}

At this point our proposed approach diverges from the conventional model order reduction 
reasoning. While in model reduction the main quality measure is the smallness of the 
approximation error of (\ref{eqn:approx}),  here we require additionally that $\tL$ retains 
some of the key properties of the original full-scale graph-Laplacian $\bL$. 
In particular, $\tL$ must be symmetric and positive semidefinite, similarly to $\bL$, 
with the sum of elements in each row equal to zero.
  
As the first step {towards building $\tL$ with the desired properties}, 
we compute a symmetric deflated block-tridiagonal matrix 
$\tA \in \RR^{n\times n}$ such that 
\begin{equation}
\hL\widetilde{\bF}(\lambda) = 
\sum_{j=1}^n \frac{\tr_j \tr_j^T}{\lambda - \tlambda_j} \equiv 
\widehat\bD^{1/2} \bE_1^T (\lambda \bI - \tA)^{-1} \bE_1 \widehat\bD^{1/2},
\label{eqn:match}
\end{equation}
where $\widehat\bD = \bB^T \bD \bB$.

This computation can be viewed as an orthogonal transform 
$\widetilde{\bQ}^T \tA_{12} \widetilde{\bQ} = \tA$ of $\tA_{12}$ to a symmetrized normalized 
graph-Laplacian $\tA$ with deflated block-tridiagonal structure.
It is performed using the deflated block-Lanczos algorithm from Appendix~\ref{app:dblanczos} 
with inputs $\bM=\tA_{12} \in \RR^{n\times n}$ and
$\bC=\bQ_{12}^T\bD^{1/2}\bB\widehat\bD^{-1/2}$ (with orthonormal columns) 
and the output being the desired $\tA=\widetilde{\bT}$ and $\widetilde{\bQ}$ such that 
\begin{equation}
\label{eqn:initcondq}
\widetilde{\bQ}\bE_1=\bC=\bQ_{12}^T\bD^{1/2}\bB\widehat{\bD}^{-1/2}.
\end{equation}

Compared to the two preceding Lanczos processes described in Appendix~\ref{app:twostage}, 
here deflation is performed at the level of machine precision and allows to equivalently transform 
the full matrix $\tA_{12}$ to $\tA$ with block-tridiagonal structure without reducing its size.
The final step towards constructing the ROGL $\tL$ is to equivalently transform the symmetrized 
matrix $\tA$ to normalized form $\tD^{-1} \tL$ so that 
\begin{equation}
\tL = \tD^{1/2} \tA \tD^{1/2},
\end{equation}
where $\tD$ is a diagonal matrix with positive elements and $\tL$ is symmetric positive-semidefinite 
with the zero sum of elements in every row. We note that such transform is obviously non-unique.  

To enforce the row zero sum condition, consider the following.
Let the columns of an orthogonal matrix $\tZ \in \RR^{n\times m_0}$ 
be the basis vectors for the nullspace of $\tA$. 
We seek $\bz_0 \in \mbox{colspan}\{\tZ\}$ satisfying 
\begin{equation}
\label{eqn:eqforbz0}
\bB^T\bD^{1/2}\one=\bB^T\bQ_{12}\widetilde{\bQ}\bz_0.
\end{equation}
Since $\bD^{1/2}\one$ belongs to the nullspace of $\bA$ and 
$\mbox{colspan}\{ \bZ \}$ is a subspace of $\mbox{colspan}\{ \bQ_{12}\widetilde{\bQ} \}$,
we obtain that 
\begin{equation}
\label{eqn:z0}
\bz_0=\widetilde{\bQ}^T\bQ_{12}^T\bD^{1/2}\one.
\end{equation}
Using (\ref{eqn:initcondq}) and the fact that columns of $\bD^{1/2}\bB$ belong to 
$\mbox{colspan}\{\bQ_{12}\}$ we obtain 
\begin{align}
\bE_1^T\bz_0 & = \bE_1^T\widetilde{\bQ}^T\bQ_{12}^T\bD^{1/2}\one \notag\\
& =  \left(\bQ_{12}\widetilde{\bQ}\bE_1\right)^T\bD^{1/2}\one \notag\\
& =  \left(\bQ_{12}\bQ_{12}^T\bD^{1/2}\bB\widehat\bD^{-1/2}\right)^T\bD^{1/2}\one \label{eqn:bz0match}\\
& =  \left(\bD^{1/2}\bB\widehat\bD^{-1/2}\right)^T\bD^{1/2}\one \notag\\
& =  \bB^T\bD^{1/2}\one \notag\\
& =  \left(\sqrt{\bD_{i_1i_1}},\ldots ,\sqrt{\bD_{i_mi_m}}\right)^T. \notag
\end{align}
Since $\bz_0\in \mbox{colspan}\{\tZ\}$, it can be expressed as $\bz_0=\tZ\bc$ 
where the expansion coefficients $\bc \in \RR^{m_0}$ are uniquely determined from 
an $m_0 \times m_0$ linear system
\begin{equation}
(\be_1, \ldots, \be_{m_0})^T\tZ\bc=\left(\sqrt{\bD_{i_1i_1}},\ldots, \sqrt{\bD_{i_{m_0}i_{m_0}}}\right)^T.
\end{equation}
For the existence of a non-singular transform of $\tA$ to the graph-Laplacian form, 
the following assumption is required.
\begin{assumption}
All entries of $\bz_0$ defined in (\ref{eqn:z0}) are nonzero.
\label{ass:z0}
\end{assumption}

\smallskip
The validity of Assumption~\ref{ass:z0} is discussed in Appendix~\ref{app:interpret}.
Assuming that it holds along with Assumption~\ref{ass:null}, we
define a scaling of $\tA$ that transforms it to the graph-Laplacian form. 
It is summarized in the proposition below.

\begin{prop}
\label{prop:scale} 
{Let $\tA$ satisfy Assumption~\ref{ass:null}, 
and $\bz_0$ from (\ref{eqn:z0}) satisfies Assumption~\ref{ass:z0}.}
Then
\begin{equation}
\tD^{1/2} = \diag (\bz_0) \in \RR^{n \times n},
\label{eqn:Dtilde}
\end{equation} 
is a non-singular diagonal scaling matrix satisfying 
\begin{equation}
{\tD_{jj}=\bD_{i_ji_j},~j=1,\ldots,m,}
\label{eqn:diagmatch}
\end{equation}
 and
\begin{equation}
\tL = \tD^{1/2} \tA \tD^{1/2} \in \RR^{n \times n},
\label{eqn:T}
\end{equation}
is a symmetric positive semidefinite matrix with the sum of elements in every row (column) 
equal to zero, which we refer to hereafter as the reduced-order graph-Laplacian (ROGL).
\end{prop}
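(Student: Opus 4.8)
The plan is to verify the four assertions of the proposition in turn: that $\tD^{1/2}=\mbox{diag}(\bz_0)$ is nonsingular, that its square reproduces the target diagonal of $\bD$ as in (\ref{eqn:diagmatch}), that $\tL=\tD^{1/2}\tA\tD^{1/2}$ is symmetric positive semidefinite, and that $\tL$ has vanishing row and column sums. Each piece is short; the substantive work has already been done in the construction of $\bz_0$ in (\ref{eqn:eqforbz0})--(\ref{eqn:z0}) and is encapsulated in the two standing assumptions.

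\emph{Nonsingularity and the diagonal match.} Nonsingularity of $\tD^{1/2}=\mbox{diag}(\bz_0)$ is immediate from Assumption~\ref{ass:z0}, which asserts that every entry of $\bz_0$ is nonzero; consequently $\tD=(\tD^{1/2})^2=\mbox{diag}(\bz_0)^2$ is diagonal with strictly positive entries, as required in (\ref{eqn:Dtilde})--(\ref{eqn:T}). For the match (\ref{eqn:diagmatch}), recall from the chain of identities (\ref{eqn:bz0match}) that $\bE_1^T\bz_0=\left(\sqrt{\bD_{i_1i_1}},\ldots,\sqrt{\bD_{i_mi_m}}\right)^T$. Since $\bE_1^T$ merely extracts the first $m$ coordinates of a vector in $\RR^n$, the first $m$ diagonal entries of $\tD^{1/2}$ equal $\sqrt{\bD_{i_ji_j}}$, and squaring gives $\tD_{jj}=\bD_{i_ji_j}$ for $j=1,\ldots,m$.

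\emph{Symmetry, semidefiniteness, and zero row/column sums.} Symmetry of $\tL$ is clear because $\tA$ is symmetric (it is the output of the deflated block-Lanczos process applied to the symmetric $\tA_{12}$, see (\ref{eqn:match})) and $\tD^{1/2}$ is diagonal. For semidefiniteness, observe that $\tA$ is orthogonally similar to $\tA_{12}=\bQ_{12}^T\bA\bQ_{12}$, which is positive semidefinite since $\bA=\bD^{-1/2}\bL\bD^{-1/2}\succeq0$; hence $\tA\succeq0$ and, for every $\bx\in\RR^n$, $\bx^T\tL\bx=(\tD^{1/2}\bx)^T\tA(\tD^{1/2}\bx)\ge0$. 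Finally, for the row sums note that $\tD^{1/2}\one=\mbox{diag}(\bz_0)\one=\bz_0$ and that $\bz_0\in\mbox{colspan}\{\tZ\}=\mathcal{N}(\tA)$ by its construction in (\ref{eqn:z0}); therefore $\tL\one=\tD^{1/2}\tA\bz_0=\tD^{1/2}\cdot 0=0$, and by symmetry $\one^T\tL=0$ as well, which establishes (\ref{eqn:T}).

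\emph{Where the content lies.} The verification above is routine; the real point is that $\bz_0$ was engineered so that it simultaneously (i) lies in $\mathcal{N}(\tA)$, which forces the row and column sums of $\tL$ to vanish, and (ii) restricts on the first $m$ coordinates to the square roots of the target diagonal entries of $\bD$, yielding (\ref{eqn:diagmatch}). Assumption~\ref{ass:null} is precisely what makes $\tZ$ a full basis of $\mathcal{N}(\tA)$ (equivalently, makes the defining equation (\ref{eqn:eqforbz0}) consistent), and Assumption~\ref{ass:z0} is exactly the condition that makes $\mbox{diag}(\bz_0)$ invertible. One cosmetic caveat worth noting: $\bz_0$ need not be entrywise positive, so $\tD^{1/2}=\mbox{diag}(\bz_0)$ is a square root of $\tD$ only up to a sign pattern; this is harmless, since every claimed property of $\tL$ involves $\tD^{1/2}$ only through products in which that sign pattern cancels.
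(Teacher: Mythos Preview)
Your proof is correct and follows essentially the same approach as the paper's own proof: the diagonal match comes from (\ref{eqn:bz0match}), symmetry and positive semidefiniteness of $\tL$ are inherited from $\tA$ via the diagonal congruence, and the zero row sums follow from $\tD^{1/2}\one=\bz_0\in\mathcal{N}(\tA)$. You supply a bit more justification than the paper does (explicitly invoking orthogonal similarity of $\tA$ to $\tA_{12}\succeq 0$) and add the helpful caveat about the sign pattern of $\bz_0$, but the logical skeleton is identical.
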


\begin{proof}
Note that (\ref{eqn:diagmatch}) follows directly from (\ref{eqn:bz0match}) and (\ref{eqn:Dtilde}). 
Then, since $\tA$ is symmetric positive semidefinite, (\ref{eqn:T})--(\ref{eqn:Dtilde}) 
in conjunction with Assumption~\ref{ass:z0} implies trivially that $\tL$ is symmetric and 
positive semidefinite.

From (\ref{eqn:z0}) it follows that  $ \bz_0$ is from the nullspace of $\tA$,
i.e., $\tA\bz_0=0$, so
\begin{equation}
0=\tA\bz_0= \tA \tD^{1/2} \one = \tD^{-1/2}\tL \one,
\end{equation}
then due to Assumption~\ref{ass:z0}, $\tL \one = 0$ which concludes the proof.
\end{proof}

Note that even though we enforce the zero row sum condition on $\tL$,
we do not (and cannot) impose the nonpositivity constraint on its off-diagonal entries,
which is the main distinction between the ROGL and the original graph-Lalacian $\bL$.
This can be justified by an analogy with PDE discretization. 
While $\bL$ can be viewed as second-order accurate discretization of a 
Laplace-Beltrami operator \cite{Belkin2008TowardsAT}, its reduced-order proxy $\tL$
roughly corresponds to a high-order Laplacian discretization, the off-diagonal entries of which, 
in general, may change sign. As pointed out in \cite{Knyazev2017SignedLF}, unlike the row zero sum 
property, oscillation of coefficients is not critical from the spectral embedding point of view, 
including the perspective of cluster analysis.
The corollary below and the following reasoning gives some insight into this property.
\begin{cor}\label{cor:1}
Let the ROM be computed via (\ref{eqn:pade}). Then the set of indicator vectors of connected 
components of $\tL$ forms a basis for its nullspace, i.e., the columns of $\tD^{1/2}\tZ$ are 
linear combinations of indicator vectors.
\end{cor}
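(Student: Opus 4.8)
The plan is to reduce the claim to a connectivity count for $\tL$ and then to use the fact that the Pad\'e/Krylov construction decouples over the connected components of $G$ that meet $G_m$. Write $G^{(1)},\dots,G^{(m_0)}$ for those components. By Assumption~\ref{ass:null} and the min-max property of the Ritz values invoked after Proposition~\ref{prop:LanczPade}, $\tA$ has exactly $m_0$ zero eigenvalues, so $\dim\ker\tA=m_0$, hence $\dim\ker\tL=m_0$ since $\tD^{1/2}$ is a nonsingular diagonal matrix, and $\ker\tL=\mbox{colspan}(\tD^{-1/2}\tZ)$. Since $\tL$ is symmetric with zero row sums (Proposition~\ref{prop:scale}), the indicator vector of each connected component of the graph of $\tL$ (the graph on $\{1,\dots,n\}$ having an edge wherever an off-diagonal entry of $\tL$ is nonzero) lies in $\ker\tL$, and indicators of distinct components are linearly independent. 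So if $\tL$ has $c$ connected components then $c\le m_0$, and the corollary is equivalent to the reverse inequality $c\ge m_0$. This is the step that genuinely uses the construction: unlike a true graph-Laplacian, $\tL$ may have positive off-diagonal entries, so its kernel dimension is not fixed by connectedness alone.

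To get $c\ge m_0$ I would show that $\tL$ is block-diagonal with $m_0$ diagonal blocks. With respect to the partition $G^{(1)},\dots,G^{(m_0)}$ (the other components of $G$ carry no columns of $\bB$ or $\bZ$), $\bA$ is block-diagonal and $\bZ$ may be taken with its $c$-th column supported in $G^{(c)}$, a normalized restriction of $\bD^{1/2}\one$. Then every column of $\bB_\perp=(\bI-\bZ\bZ^T)\bB$ is supported in a single $G^{(c)}$ and orthogonal to $\ker\bA$, so $\bA^{-1}$ (understood on $(\ker\bA)^\perp$, where $\bB_\perp$ lives) maps it within that component and the block Krylov subspace $\K_{k_2}[\bA^{-1},\bB_\perp]$ splits as a direct sum over the $G^{(c)}$. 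Hence the columns of the orthonormal basis $\bQ$, and so of $\bQ_{12}=[\bQ,\bZ]$, may be taken each supported in one $G^{(c)}$ — which is what the block-Lanczos recursion of Appendix~\ref{app:twostage} produces anyway, and in any event the sparsity pattern of $\tL$ does not depend on the choice of orthonormal basis $\bQ_{12}$. Grouping the columns of $\bQ_{12}$ by component makes $\tA_{12}=\bQ_{12}^T\bA\bQ_{12}$ block-diagonal with $m_0$ diagonal blocks, and likewise the starting block $\bC=\bQ_{12}^T\bD^{1/2}\bB\widehat\bD^{-1/2}$, whose $j$-th column is $\be_{i_j}$, is block-diagonal for the same row partition. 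The deflated block-Lanczos of Appendix~\ref{app:dblanczos}, run on a block-diagonal matrix with a compatibly block-diagonal starting block, returns a block-diagonal output, so $\tA$, and hence $\tL=\tD^{1/2}\tA\tD^{1/2}$, is block-diagonal with $m_0$ diagonal blocks $\tL^{(1)},\dots,\tL^{(m_0)}$. Finally, in $\bQ_{12}$ the $c$-th column of $\bZ$ sits in block $c$, and the corresponding standard basis vector of $\RR^n$ lies in $\ker\tA_{12}$; these $m_0$ vectors span $\ker\tA_{12}$, so each diagonal block $\tA_{12}^{(c)}$ has a one-dimensional kernel, and therefore so does $\tL^{(c)}$, which is obtained from $\tA_{12}^{(c)}$ by an orthogonal similarity followed by a nonsingular diagonal rescaling.

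The corollary now follows by repeating the first-paragraph count inside each block: $\tL^{(c)}$ is symmetric with zero row sums, so if its graph split into two or more connected components their indicators would span a subspace of $\ker\tL^{(c)}$ of dimension at least $2$, contradicting $\dim\ker\tL^{(c)}=1$. Hence each $\tL^{(c)}$ is connected, $\tL$ has exactly $m_0$ connected components, and their $m_0$ linearly independent indicator vectors form a basis of the $m_0$-dimensional $\ker\tL$; equivalently, the columns of $\tD^{-1/2}\tZ$, which span $\ker\tL$, are linear combinations of those indicators. The point I expect to require genuine care is the propagation of the block-diagonal structure through the block-Lanczos procedures: one must check that a deflation occurring inside one $G^{(c)}$ never couples it to another block, and note that the freedom left in the intermediate orthonormalizations affects only the (already acknowledged) non-uniqueness of the realization $\tA$, not its zero pattern. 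Everything else is bookkeeping.
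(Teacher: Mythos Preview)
Your proof is correct and follows the same core idea as the paper's: the Pad\'e/Krylov construction decouples over the $m_0$ connected components of $G$ that meet $G_m$, so $\tL$ inherits a block-diagonal structure with $m_0$ blocks, each having zero row sums by Proposition~\ref{prop:scale}. Your version is considerably more careful than the paper's four-sentence sketch --- you add the dimension count $\dim\ker\tL=m_0$, the argument that each diagonal block $\tL^{(c)}$ is itself connected, and you correctly identify $\ker\tL=\mbox{colspan}(\tD^{-1/2}\tZ)$ (the statement's $\tD^{1/2}\tZ$ is a typo) --- but the approach is the same.
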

\begin{proof}
First we notice that $\hL{\bF}(\lambda)$ is a block-diagonal matrix with blocks corresponding 
to the inputs and outputs in the  connected components of  $\tL$ . Computation of (\ref{eqn:pade}) 
can be decoupled for  every block  resulting in the same block structure of $\hL\widetilde{\bF}(\lambda)$.
Thus, $\tL$ is split into decoupled blocks corresponding to decoupled blocks of $\hL\widetilde{\bF}(\lambda)$. 
Then according to Proposition~\ref{prop:scale} every block has the row zero sum property, 
so the piecewise constant vectors which are constant on the decoupled blocks, 
including blocks indicator functions, belong to the nullspace of $\tL$. 
\end{proof}

Corollary~\ref{cor:1} (as well as its proof) allows to apply the perturbation theory reasoning of the 
standard graph-based cluster analysis, e.g., see \cite {Shi1997NormalizedCA,  Ng2001OnSC, Belkin2003LaplacianEF,vonLuxburg2007,Damle2016RobustAE} to the ROGL.
Connected components of graphs are trivial clusters. Removing edges via different variants of 
clustering algorithms will create multiple connected components, 
and can be viewed as a (singular) perturbation of the eigenvectors corresponding to minimal 
positive eigenvalues, which relates them to the nullspace of those  multiple connected components 
and yields the indicator vectors of the connected components exactly. Thus, spectral clustering 
algorithms are looking for the indicators that can be best approximated by the eigenvectors 
corresponding to the smallest eigenvalues of the graph-Laplacian.
Thanks to Corollary~\ref{cor:1}, the indicator vectors of connected components of ROGL can 
generate a basis on its nullspace, so the standard perturbation arguments are applicable in this case.

The embedding properties of the ROGL are also closely related to the ones of the ROM network 
realization known as finite-difference Gaussian quadrature rules or simply optimal grids, 
as discussed in Appendix~\ref{app:interpret}. 

\begin{figure}[htb]
\centering
\includegraphics[scale=0.5]{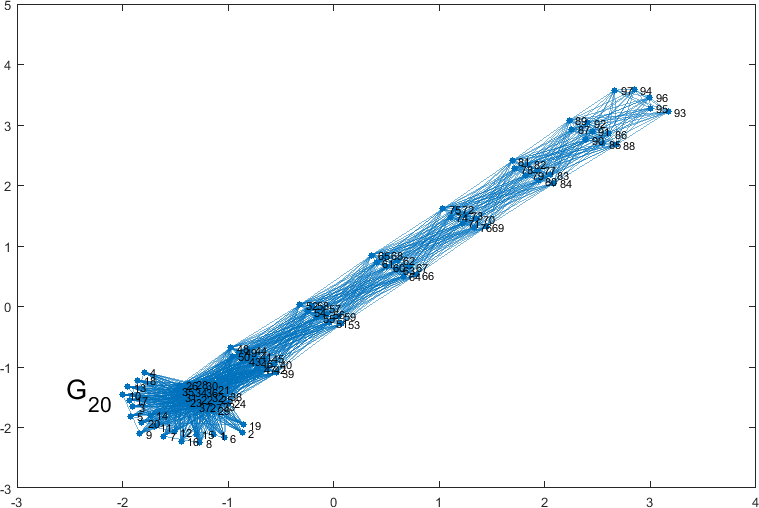} 
\caption{Reduced order graph corresponding to the rescaled deflated block-tridiagonal ROGL 
$\tL$ computed for the  Astro Physics 
collaboration network data set with $N=18872$,
$m = 20$,  $n = 97$. Vertices $1-20$ of the reduced order graph
correspond to the target set $G_{20}$ of the original graph.}
\label{fig:rog}
\end{figure}

We display in Figure~\ref{fig:rog} the reduced-order graph corresponding to ROGL $\tL$. 
It is computed for the Astro Physics collaboration network data set with $N=18872$ described 
in Section~\ref{subsection:astro}. Since the first block of $\tL$ corresponds to the original graph's vertices 
in the target subset $G_{20}$, we label them as vertices $1$ through $20$ of the reduced-order graph,
in general denoted hereafter by $\tG_m = \{ 1,2,\ldots,m \}$.
While the edge weights of the reduced order graph are strongly non-uniform 
(see the discussion in Appendix~\ref{app:interpret}), we plot all edges in the same way just to highlight the 
deflated block-tridiagonal structure of $\tL$. Deflation is observed in the shrinking of the blocks away from 
$\tG_{20}$. Overall, we observe about $200$-fold model order reduction ($n = 97$) compared to 
the original problem size ($N = 18872$) with the diffusion time distance error of the order of 
$10^{-10}$, as discussed in the following section (see also the convergence plot in 
Figure~\ref{fig:lanc_conv}).
\subsection{Commute-time and diffusion distances approximations by ROGL}
\label{sub:ctdiffdist}

Diffusion and commute-time distances are widely used measures that quantify random walks on graphs. 
We already mentioned connection of the former distance to the transfer function used in construction 
of reduced order models. Here we discuss distance-preserving properties of our ROM in more detail.

For $p \in \NN \cup \{0\}$ denote by $P^p_j(G)$ the probability clouds on graph vertices $G$, 
i.e., the probability distributions at the discrete time $p$ for a random walk originating at node $j$ 
at the discrete time $0$. The diffusion distance between two nodes at time $p$ is defined as a 
distance between probability clouds of random walks originating at those nodes 
(see \cite{Coifman_Lafon_2006}):
\begin{equation}
\left( D^p_{jk}(G) \right)^2 = \| P^p_j (G) - P^p_k(G) \|^2_{\bD}.
\end{equation}
It can be expressed in terms of normalized symmetric random-walk graph-Laplacian as 
\begin{equation}
\left(D^p_{jk}(G)\right)^2 = 
\left( \sqrt{{L_{jj}}} \be_j^T - \sqrt{L_{kk}} \be_k^T \right) 
\left( \bI - \bA_{RW} \right)^{2p}
\left( \sqrt{L_{jj}} \be_j - \sqrt{L_{kk}} \be_k \right).
\label{eqn:diffdist}
\end{equation}

Commute-time distance between two vertices is defined as the expected time it takes a random 
walk to travel from one vertex to the other and return back. Note that another metric, the 
so-called resistance distance, differs from the commute-time distance just by a constant factor. 
It can be expressed in terms of the graph-Laplacian as follows (see \cite{vonLuxburg2007}): 
\begin{equation}
C^2_{jk}(G) = \left( \be_j^T - \be_k^T \right) \bL^\dagger \left( \be_j - \be_k \right)
\end{equation}
where $\bL^\dagger$ is Moore-Penrose pseudo-inverse of $\bL$. 
Replacing $\bL$ by its normalized symmetric counterpart $\bA_{RW}$, we obtain
\begin{equation}
C^2_{jk}(G) = 
\left( \frac{1}{\sqrt{L_{jj}}} \be_j^T - \frac{1}{\sqrt{L_{kk}}} \be_k^T \right) 
\bA^\dagger_{RW}
\left( \frac{1}{\sqrt{L_{jj}}} \be_j - \frac{1}{\sqrt{L_{kk}}} \be_k \right).
\label{eqn:ctdist}
\end{equation}

Both the diffusion distance for large times $p$ and the commute-time distance take into account the 
entire structure of the graph. Hence, they represent useful tools for graph clustering compared, 
e.g., to the shortest-path distance. The structure of ROGL $\tL$ is obviously totally different than the 
structure of the original $\bL$. Therefore, even for the vertices in $G_m$ 
(that correspond to vertices $\tG_m$ in the reduced-order graph) the shortest-path distance is not 
preserved in general. However, we establish below that commute-time and diffusion distances 
between the vertices of $G_m$ are approximately preserved in the reduced-order graph with 
exponential accuracy.

\begin{prop}
\label{prop:distpres}
Let $G = \{ 1,2,\ldots,N \}$ and $\tG =\{ 1,2,\ldots,n\}$ be the vertex sets of the original graph $L_{RW}$ 
and the reduced-order graph with deflated block-tridiagonal ROGL $\tD^{-1}\tL$, respectively.
Then for any two vertices $i_j, i_k \in G_m$ that correspond to vertices $j, k \in \tG_m$, we have 
\begin{equation}
D^p_{i_j, i_k}(G) \approx D^p_{jk}(\tG) 
\mbox{ and } 
C_{i_j, i_k}(G) \approx C_{jk}(\tG)
\end{equation}
with exponential accuracy with respect to $n$.
\end{prop}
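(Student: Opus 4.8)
The plan is to reduce both claims to the exponential (in $n$) convergence of the ROM transfer function $\widetilde{\bF}(p)\approx\bF(p)$ recalled in Section~\ref{sec:ROGL}. The key point is that, because of the random-walk normalization, each of the two distances between vertices of the target subset is a quadratic form in the transfer function restricted to $G_m$, and the same quadratic form applied to $\widetilde{\bF}$ is the corresponding distance on $\tG$; the errors are then controlled by $\|\bF-\widetilde{\bF}\|$. Throughout I take $\bD=\bD_{RW}$, $\bA=\bA_{RW}$, $\tau=1$, as appropriate for $\bL_{RW}$, and I read the reduced-order graph as the pair $(\tL,\tD)$ from Proposition~\ref{prop:scale}, so that its degree matrix is $\tD$ (which agrees with $\bD_{RW}$ on $\tG_m$ by \nref{eqn:diagmatch}).

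First I would rewrite the diffusion distance in closed form. For $i_j\in G_m$ we have $\be_{i_j}=\bB\be_j$ and $\sqrt{L_{i_ji_j}}\,\be_{i_j}=\bD_{RW}^{1/2}\bB\be_j$, so substituting into \nref{eqn:diffdist} and using \nref{eqn:transferRW} gives
\begin{equation}
\bigl(D^p_{i_j,i_k}(G)\bigr)^2=(\be_j-\be_k)^T\bF(2p)(\be_j-\be_k),
\end{equation}
with $\be_j,\be_k$ now the standard basis vectors of $\RR^m$. On the reduced-order graph $\bD_{RW}$ is replaced by $\tD$; since $\tD_{jj}=\bD_{i_ji_j}$ for $j\le m$ we have $\widehat\bD=\bE_1^T\tD\bE_1$, and since $\tA=\tD^{-1/2}\tL\tD^{-1/2}$ is an orthogonal compression of $\bA_{RW}$ we still have $\|\tA\|\le 2$, so $\tau=1$ is admissible on $\tG$. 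Combining \nref{eqn:FROM} (with $\tB=\bE_1$) and the matching identity \nref{eqn:match} yields $\widetilde{\bF}(p)=\bE_1^T\tD^{1/2}(\bI-\tA)^p\tD^{1/2}\bE_1$, and exactly as above
\begin{equation}
\bigl(D^p_{jk}(\tG)\bigr)^2=(\be_j-\be_k)^T\widetilde{\bF}(2p)(\be_j-\be_k).
\end{equation}
Subtracting, the diffusion-distance claim follows from the bound on $\|\bF(2p)-\widetilde{\bF}(2p)\|$ that is exponentially small in $n$, uniformly in $p$ (and sharpest for large $p$), which is the time-domain form of the Stieltjes matrix Padé convergence recalled after \nref{eqn:pade}, see \cite{dyukarev2004indeterminacy,Baker1996PadAS}.

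For the commute-time distance I would pass to the Laplace domain. Using the pseudo-inverse expression behind \nref{eqn:ctdist} together with $\bB^T\bD_{RW}^{-1/2}\bu_j=\widehat\bD^{-1}\br_j$ (immediate from $\widehat\bD=\bB^T\bD_{RW}\bB$ and $\br_j=\bB^T\bD_{RW}^{1/2}\bu_j$), one gets
\begin{equation}
C^2_{i_j,i_k}(G)=(\be_j-\be_k)^T\widehat\bD^{-1}\Bigl(\textstyle\sum_{\lambda_j>0}\lambda_j^{-1}\br_j\br_j^T\Bigr)\widehat\bD^{-1}(\be_j-\be_k),
\end{equation}
where the bracketed matrix is minus the value at $\lambda=0$ of the analytic part of $\hL\bF$ in \nref{eqn:FROMsdL} (i.e.\ of $\hL\bF$ with its $\lambda^{-1}$ pole term removed). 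The identical computation on $\tG$ expresses $C^2_{jk}(\tG)$ through the analytic part of $\hL\widetilde{\bF}$ at $\lambda=0$. By Assumption~\ref{ass:null} the pole at the origin has matched multiplicity $m_0$ and, by the first line of \nref{eqn:pade}, is reproduced exactly; hence the two analytic parts differ by exactly the $i=0$ term of the Padé conditions \nref{eqn:pade}. Since that term is matched (and, more generally, Stieltjes Padé approximants converge fastest near the origin), $C^2_{i_j,i_k}(G)\approx C^2_{jk}(\tG)$ with exponential accuracy in $n$; in the non-deflated Padé construction it is in fact reproduced exactly.

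The routine part of the argument is the bookkeeping of the diagonal rescalings. The two substantive points are (i) checking that the reduced-graph distances are genuinely the stated quadratic forms in $\widetilde{\bF}$, which rests on $\tB=\bE_1$, the leading-block identity $\widehat\bD=\bE_1^T\tD\bE_1$ of Proposition~\ref{prop:scale}, and $\|\tA\|\le 2$; and (ii) for the commute-time distance, the correct handling of the Moore--Penrose pseudo-inverse (using that $\be_j-\be_k$ annihilates the nullspace, so any solution of the associated linear system works) and of the limit $\lambda\to0$, where Assumption~\ref{ass:null} and the ``best convergence near the origin'' property of Stieltjes Padé approximants are essential. I expect (ii) to be the main obstacle to present cleanly; the quantitative exponential rate would be imported verbatim from the Padé-approximation references already cited in Section~\ref{sec:ROGL}.
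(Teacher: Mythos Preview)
Your reduction of both distances to quadratic forms in the (Laplace-domain, resp.\ time-domain) transfer function restricted to $G_m$, together with the leading-block identity $\widehat\bD=\bE_1^T\tD\bE_1$ from Proposition~\ref{prop:scale}, is correct and is exactly the mechanism the paper uses. In particular, your observation that the $i=0$ Pad\'e moment in \nref{eqn:pade} makes the commute-time distance \emph{exact} for the pure Pad\'e ROM matches the paper's conclusion $\Delta^2 J_{jk}=0$ for $k_2>1$.

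The one place where the paper's argument is more explicit than yours is the error accounting. The ROGL $\tA$ is not the pure Pad\'e-at-$0$ interpolant of $\bA$; it is built by the two-stage Algorithm~\ref{alg:twostagerom} (Krylov in $\bA$ at $\infty$, then Krylov in $\bT_1^{-1}$ at $0$) followed by the unitary change of basis of Section~\ref{sub:transgraph}. The paper therefore splits each error as
\[
\Delta = \underbrace{\Delta^1}_{\bA\to\bT_1}+\underbrace{\Delta^2}_{\bT_1\to\tA_{12}}+\underbrace{\Delta^3}_{\tA_{12}\to\tA},
\]
notes $\Delta^3=0$ since the last step is orthogonal, and then invokes Krylov-subspace convergence \cite{druskin1995krylov} separately for the two Lanczos stages: $\Delta^1 P^{2p}=0$ for $k_1\ge p$ and $\Delta^1 J\to 0$ exponentially; $\Delta^2 J=0$ and $\Delta^2 P^{2p}\to 0$ exponentially. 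Your appeal to ``Stieltjes Pad\'e convergence near the origin'' covers only the $\Delta^2$ piece; the first-stage contribution $\Delta^1$ (a polynomial Krylov error, not a Pad\'e error) needs its own citation, which is what the paper supplies. This is a presentational gap rather than a mathematical one: once you insert that stage-wise split, your proof and the paper's coincide.
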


The proof is provided in Appendix \ref{app:distpres}. Note that ROGL $\tL$ may have 
positive off-diagonal elements in general. Hence, the random walk on $\tG$ can be considered 
from a quasi-probability distribution point of view, i.e., with negative probabilities 
(first introduced in \cite{Dirac_1942}) of traveling along certain edges.

%==================================================================================================
\section{Applications to clustering}
\label{sec:clustappl}

Here we discuss application of the above model reduction algorithm to clustering a subset of graph's vertices.
While a major goal of this work is to avoid clustering the full graph, the resulting subset clustering 
must nevertheless be consistent with full graph clustering that one may perform. 
Assume that we choose a reference clustering algorithm, that might be too expensive to apply 
to the full graph. By consistency we understand, that if subset vertices belong to the same cluster 
of the full graph, they must to belong to the same cluster of the subset. Likewise, if they belong 
to different clusters of the full graph, they must also belong to different clusters of the subset, 
as illustrated in Figure~\ref{fig:consistency}. In some cases, however, application of a 
reference algorithm to the full graph may be infeasible, e.g., due to prohibitively large complexity, 
in which case we will use for "consistency"' comparison the knowledge of ground truth clusters,
where available.
\begin{figure}[htb]
\centering
\hspace{-0.12in}
\includegraphics[width=0.33\textwidth]{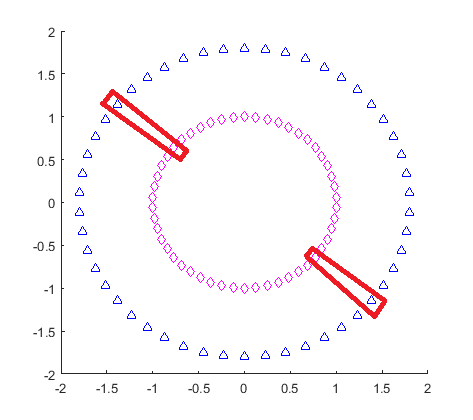}\hspace{0.001in}
\includegraphics[width=0.33\textwidth]{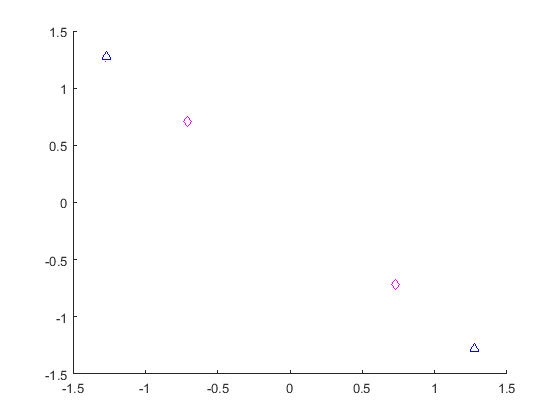}\hspace{0.001in}
\includegraphics[width=0.33\textwidth]{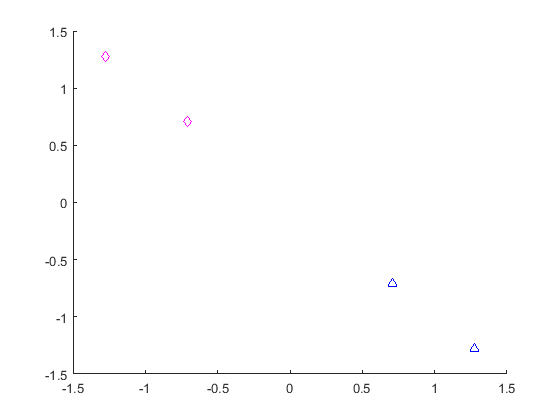}\hspace{0.001in}
\vspace{-0.12in}
\caption{Illustration of clustering consistency.
Identical markers represent data points from the same clusters. 
Leftmost plot shows two circular clusters that can be obtained using full-graph clustering. 
The target subset consisting of 4 nodes shown in red frames was clustered consistently 
by our ROM-based approaches (middle plot). 
However, without taking into account the full data set manifold, 
clustering result is typically inconsistent with the structure of the full graph (rightmost plot).}
\label{fig:consistency}
\end{figure}

It is known \cite{Belkin2003LaplacianEF,vonLuxburg2007} that one of the robust spectral clustering 
algorithms is based on the embedding via the eigenmap of the random-walk normalized graph-Laplacian. 
Thus, in this section we will use such graph-Laplacians by default, and omit $_{LW}$ subscripts for brevity.

%==================================================================================================
\subsection{Embedding and clustering via Ritz vectors}
\label{sub:ritzclust}

As mentioned in Section \ref{sec:intro}, in order for the clustering of the target subset to be 
consistent with the overall graph structure, the internal degrees of freedom 
(degrees of freedom corresponding to $G \setminus G_m$) must be properly sampled. 

We will substitute the exact eigenvectors in spectral embedding by the Ritz vectors $\bw_j$.
The state space dynamics of the ROM (\ref{eqn:state}) is described by $n$ terms of spectral 
decomposition with Ritz pairs compared to $N \gg n$ terms in the full spectral decomposition 
with exact eigenpairs. Such compression is partly explained in Remark~\ref{rem:conncomp}. 
The tradeoff is that the convergence of the late time diffusion on $G_m$ 
given by (\ref{eqn:FROMsdL}) does not guarantee the same accuracy level of (\ref{eqn:state}) 
on $ G \setminus G_m$. The latter is just the square root of the former (see, e.g., \cite{Reichel_2016}) 
even though both errors decay exponentially with respect to $n$.  Moreover, spectral decomposition 
using the same Ritz pairs $\tlambda_j$, $\bw_j$ may not converge to the true solution at all, 
if one applies it when both the inputs and outputs are on $G \setminus G_m$. 
However, we assume that this should not affect clustering quality on $G_m$.  
The reasoning behind this assumption will become more clear when we discuss the 
geometric interpretation of the ROM via ROGL in Section \ref{sub:romgraph}.

To perform the spectral embedding on Ritz vectors, we need to sample them at
some vertices $q_j$ of the graph $G$. 
Let $q_j \in G$, $j = 1,2,\ldots,n_s$ be such sampling vertices, so that 
\begin{align}
q_j = i_j \in G_m \quad & \mbox{if} \quad 1 \leq j \leq m, \\
q_j  \in G \setminus G_m \quad & \mbox{if} \quad m+1 \leq j \leq n_s,
\end{align} 
i.e., the first $m$ vertices are always the ones form the target subset, 
while the remaining $n_s - m$ auxiliary vertices are chosen according to
Algorithm \ref{alg:cluster1} presented below.
Sampling the Ritz vectors at these vertices means computing the inner products 
$\be_{q_j}^T \bw_k$,
$j = 1,2,\ldots,n_s$, $k = 1,2,\ldots,n$.

%==================================================================================================
%\subsection{Clustering algorithm}
%\label{sub:cluster1}
%~
\begin{algorithm}[Ritz vectors sampling clustering {[RVSC]}]\\
\label{alg:cluster1}
\noindent \textbf{Input:} normalized symmetric graph-Laplacian $\bA \in \RR^{N \times N}$, 
target subset $G_m$, number $n_c$ of clusters to construct, size $n_0$ of embedding subspace, 
set $\{q_j \in G\}^{n_s}_{j=1}$ of sampling points, numbers of Lanczos steps $k_1$, $k_2$ 
for the first and second stage deflated block-Lanczos processes in Algorithm~\ref{alg:twostagerom}, 
respectively, and   truncation tolerance $\varepsilon$. \\
\textbf{Output:} Clusters $C_1, C_2, \ldots ,C_{n_c} \subset G_m$.\\
\textbf{Step 1:} Use Algorithm \ref{alg:twostagerom} to compute  Ritz vectors 
$\bw_k$\footnote[4]{Vectors $\bw_k$ need only computed at sampling points, 
and only rows  $\be_{q_j}^T\bQ_1$  need to be stored, 
which leads to significant savings in Algorithm~\ref{alg:twostagerom}.}
 and compute the sample matrix $\bH \in \RR^{n_s\times n_0}$:
\begin{equation}
\bH_{jk} = \be_{q_j}^T \bw_k, \quad j=1,\ldots,n_s, \quad k=1,\ldots,n_0.
\end{equation} 
\noindent \textbf{Step 2:} Apply an approximate k-means algorithm, e.g., Lloyd's or SDP, 
to the matrix $\bH$ to cluster $\{q_j\}^{n_s}_{j=1}$ into $n_c$ clusters 
$\widetilde{C}_1, \widetilde{C}_2,\ldots,\widetilde{C}_{n_c}$.\\
\textbf{Step 3:} Set $C_j = \widetilde{C}_j \cap G_m$, $j=1,\ldots, n_c$.\\
\end{algorithm}

Typically, we choose $n_s$ equal to the ROM order $n$ and the sampling vertices $q_j$ are chosen 
randomly following the reasoning outlined in Appendix~\ref{app:interpret}.

%==================================================================================================
\subsection{Clustering via reduced order graph-Laplacian}
\label{sub:romgraph}

% \- to enforce hyphenation if needed
In this section we present the second reduced order clustering algorithm, based on transforming the 
ROM to ROGL form. Unlike Algorithm~\ref{alg:cluster1}, no embedding into $\RR^N$ is needed for 
ROGL-based clustering. Instead, all $n \ll N$ internal degrees of freedom are sampled directly from the 
ROGL {and are treated similarly to those of} the full-scale graph-Laplacians. 
This allows to use the approach in a purely {\it data-driven fashion}, 
e.g., when {\it only diffusion information on the target set is available and the entire graph is not 
directly accessible}. This particular aspect is a part of our plan for future research.

As discussed  in Appendix~\ref{app:interpret}, the weight matrix $\tD$ in (\ref{eqn:Dtilde}) 
differs significantly from the diagonal of $\tL$ when we applied it to  full random-walk normalized graph 
Laplacians, i.e., the ROGL normalization is not the same as in the full graph case. 
This difference manifests itself as ``ghost clusters'' that do not intersect the target set, 
in particular, in the parts of the reduced order graph corresponding to last blocks of $\tL$ that have 
little influence on diffusion at $\tG_m$. This issue can in principle be resolved by reweighing 
{Lloyd's or SDP} algorithms, however, here we suggest an alternative solution
by introducing auxiliary clusters, as explained below. 

 \begin{algorithm}[Finding an optimal number of auxiliary clusters]\\
\label{alg:hak}
For a reduced-order graph clustered into $n_t$ clusters, denote by $n_g$ the number 
of such clusters that have a nonempty intersection with $\tG_m$. 
This defines $n_g = n_g(n_t)$ as a function of $n_t$. This function is piece-wise constant 
with a number of plateaus, the intervals of $n_t$ where $n_g$ assumes the same value. 
Note that in practice $n_g \ll n_t$. Next, for a desired number of clusters $n_c$, 
we choose the optimal value of $n_t$, denoted by $n_t^\star$, such that the corresponding 
$n_g^\star = n_g(n_t^\star)$ is as close to $n_c$ as possible (ideally, $n_c = n_g^\star$) 
and $n_t^\star$ is at the midpoint of the corresponding plateau.
\end{algorithm} 

Algorithm~\ref{alg:hak} ensures that the number of auxiliary clusters produces 
the number of clusters at the target subset as close as possible to the desired one 
in the most stable manner. This provides extra flexibility in finding the optimal number 
of auxiliary clusters at a cost of recomputing clustering for a number of test values of 
$n_t$, and, possibly, using different relaxations of k-means and normalized cut
on the precomputed ROGL $\tD^{-1}\tL$.
The resulting ROGL-based clustering algorithm is summarized below.
~
\begin{algorithm}[Reduced order graph-Laplacian clustering {[ROGLC]}]\\
\label{alg:cluster2}
\noindent
\textbf{Input:} normalized symmetric graph-Laplacian $\bA \in \RR^{N \times N}$, target subset $G_m$, 
number $n_c$ of clusters to compute, 
 numbers of Lanczos steps $k_1$, $k_2$ for the first and second stage deflated block-Lanczos 
processes in Algorithm~\ref{alg:twostagerom}, respectively, and truncation tolerance $\varepsilon$.  \\
\textbf{Output:} The optimal number $n_g^\star$ of clusters in $G_m$ and 
clusters $C_1, C_2, \ldots, C_{n_g^\star} \subset \tG_m$. \\
\textbf{Step 1:} Apply Algorithm~\ref{alg:twostagerom} to compute the deflated matrix
$\tA_{12} \in \RR^{n \times n}$ and the orthogonal matrix $\bQ_{12} \in \RR^{N \times n}$.\\
\textbf{Step 2:} Apply Algorithm~\ref{alg:DBL} to perform the deflated block-Lanczos process 
with input $\bM=\tA_{12}$, $\bC=\bQ_{12}^T\bD^{1/2}\bB\widehat{\bD}^{-1/2}$ with 
$\widehat\bD = \bB^T \bD \bB$  and output $\tA=\widetilde{\bT}\in\RR^{n\times n}$. \\
\textbf{Step 3:} Compute the orthogonal matrix $\tZ\in\RR^{n\times m_0}$ such that
$\mbox{colspan}\{\tZ\} = \mathcal{N}(\tA)$ and use (\ref{eqn:z0})--(\ref{eqn:T}) 
to compute the reduced-order graph-Laplacian $\tL \in \RR^{n\times n}$ 
and the diagonal normalization matrix $\tD \in \RR^{n\times n}$.\\
\textbf{Step 4:} Apply approximate k-means or normalized cut algorithm to 
$\tD^{-1}\tL \in \RR^{n\times n} $ for trial values of $n_t$ to cluster $n$ reduced-order 
graph vertices into $n_t$ clusters $\widetilde{C}_1$, $\widetilde{C}_2$, $\ldots$, $\widetilde{C}_{n_t}$, 
and find $C_j$, $j = 1, \ldots, n_g$, 
as their nonempty intersections with $\tG_m$.\\
\textbf{Step 5:} Compute the optimal numbers of clusters $n_t^\star$ and $n_g^\star$ as defined 
in Algorithm~\ref{alg:hak} and set the output clusters to the corresponding 
$C_1, C_2, \ldots ,C_{n_g^\star} \subset \tG_m$ from Step 4.
\end{algorithm}

%==================================================================================================

\section{Clustering examples}
\label{sec:numerical}

We validate the performance of our clustering algorithms over three scenarios. 
The first one is a synthetic weighted 2D graph with two circular clusters. 
The other two graphs were taken from SNAP repository \cite{snapnets}: 
one with ground-truth communities (E-Mail network) and one without 
(collaboration network of arXiv Astro Physics).
For the sake of brevity, hereafter we refer to both clustering algorithms by their acronyms,
i.e., to Algorithm~\ref{alg:cluster1} as RVSC and to Algorithm~\ref{alg:cluster2} as ROGLC. 
In all the examples we used  random-walk normalized graph-Laplacian formulation  
(\ref{eqn:randwalkdiag})--(\ref{eqn:randwalknorm}). We should point out that in all 
our numerical experiments we use kmeans++ \cite{kmeanspp} implementation of Lloyd's algorithm, 
which is one of the most advanced implementations available.

%==================================================================================================
\subsection{Synthetic 2D weighted graph}
\label{subsec:synth}

The first scenario we consider is a synthetic data set consisting of $N = 100$ points in the 
2D plane $\{ \bx_i \in \RR^2\}^N_{i=1}$ comprised of $2$ equally sized circular ``clouds'' of
points (we reserve the name ``clusters'' to the subsets computed by the clustering algorithms), 
as shown in the leftmost plot in Figure~\ref{fig:consistency}. Corresponding to this data set, 
we construct a weighted fully connected graph with the corresponding graph-Laplacian entries 
$L_{ij}$ defined via the heat kernel:
\begin{equation}
L_{ij} = - e^{-\| x_i-x_j \|^2 / \tau^2}, \quad i \ne j,
\label{eqn:heatker}
\end{equation}
where $\| \cdot \|$ is the standard Euclidean norm in 2D. The similarity measure and, consequently, 
graph clustering depend strongly on the choice of parameter $\tau$ in (\ref{eqn:heatker}). 
For small $\tau$ the heat kernel is close to Dirac's $\delta-$function, so the distance between 
any two vertices is close to $0$. In contrast, for large $\tau$ the distance between
all vertices is approximately the same. The most difficult and interesting case for 
clustering would be an intermediate case, i.e. when the nodes from each cloud form their separate 
clusters, but each cluster remains coupled with some of its neighbors. 
For the numerical experiments in this scenario we choose $\tau=0.6$.

We choose a target subset $G_{4}$ corresponding to two opposite points from each cloud 
(shown as red frames in the leftmost plot in Figure \ref{fig:consistency}). We expect the 
algorithm for clustering of $G_4$ to place two points from the inner circle into one cluster and 
two remaining points into another one. For our ROM-based approaches we used parameters 
$k_1=20$, $k_2=4$ for Algorithm~\ref{alg:twostagerom}. For these parameters and 
threshold $\epsilon=10^{-8}$ no deflation occurred, so the resulting Krylov subspaces have 
dimensions $n_1 = 80$ and $n = 16$, respectively.
For the clustering with both our algorithms we use the spectral embedding into the 
subspace of dimension $2$. We used the same number of clusters $n_c = 2$  
for all 3 algorithms. That  yielded  $n_t^\star = 12$ and $n_g^\star = n_c = 2$ for ROGLC.
We benchmark the clustering of $G_{4}$ against the random-walk normalized graph Laplacian 
spectral clustering (RWNSC) of the full graph, as described in \cite{vonLuxburg2007}.

The results of both our approaches coincide and they are shown in the middle plot in 
Figure~\ref{fig:consistency}. As one can observe, target subset clustering results are consistent with 
full graph clustering (RWNSC). This was achieved by constructing ROMs that take into account the topology of 
the entire data manifold. Without doing that, the results can be wrong. In particular, if we exclude all 
other $96$ vertices from the graph and remove all the adjacent edges then the clustering of just $4$ 
vertices from $G_4$ would be wrong (see the rightmost plot in Figure~\ref{fig:consistency}).

%==================================================================================================
\subsection{E-Mail network with ground-truth communities}

In the second scenario we consider the graph of ``email-Eu-core'' network generated using email 
data from a large European research institution, available at the SNAP repository \cite{snapnets}. 
The original graph is directed, so we symmetrize $\bL$ for our numerical experiment. 
The network consists of $N=1005$ nodes that are split into $42$ ground-truth communities.
However, this data set is quite ``noisy'' in a sense that not all of the ground-truth communities are 
clearly recognizable from the graph structure. For example, the graph contains obvious outliers, 
i.e., isolated vertices, as shown in Figure~\ref{fig:emailgraph}. We remove them prior to clustering, 
however, some outliers are not as obvious and cannot be easily removed without affecting the 
overall graph structure. Thus, as we shall see, this graph demands particularly robust clustering 
algorithms, which are less sensitive to noise in the data.

\begin{figure}[t]
\centering
\includegraphics[scale=0.25]{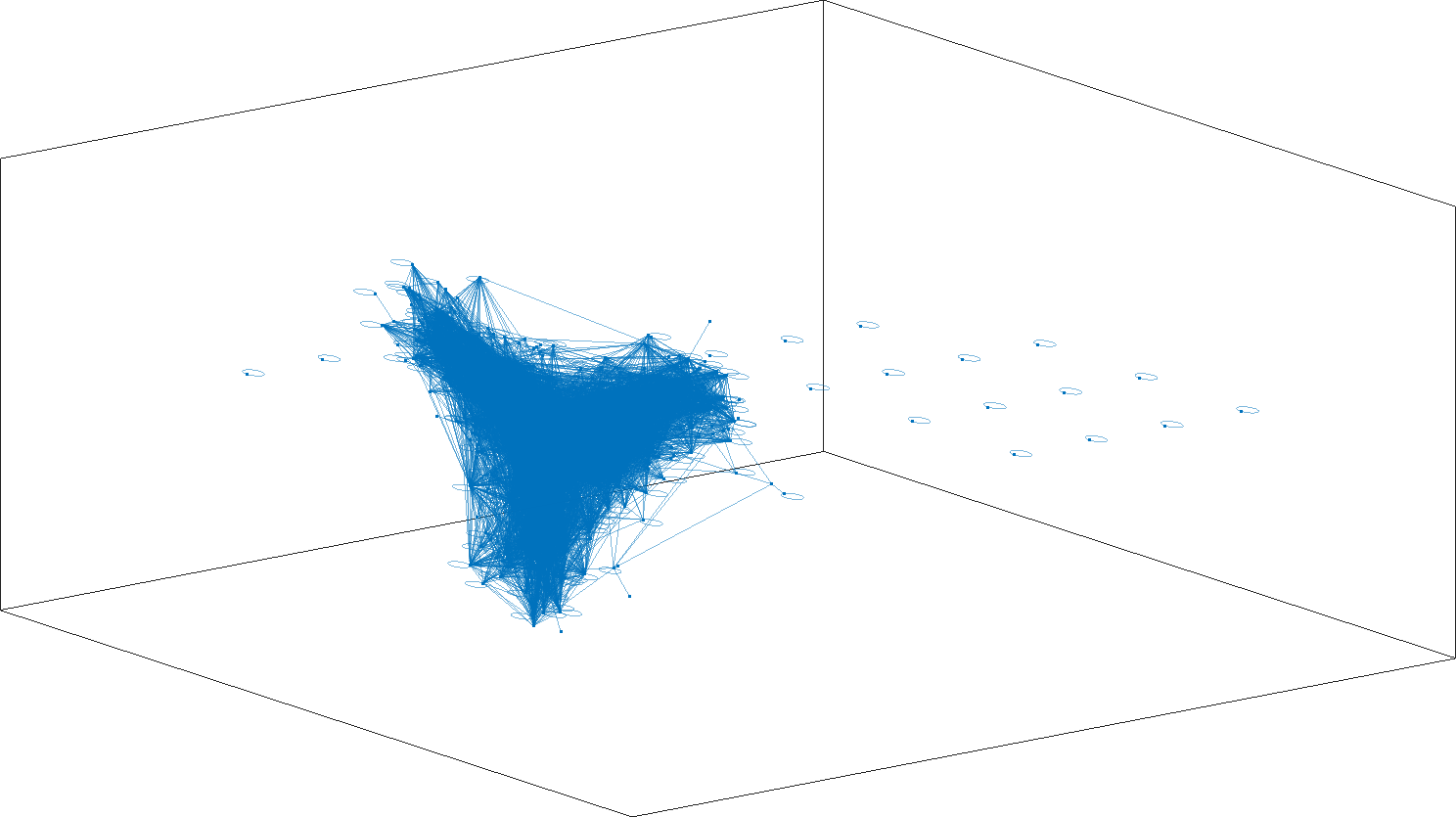} 
\caption{3D embedding of ``email-Eu-core'' network from SNAP depository. 
The graph contains a large number of self-looped outliers.}
\label{fig:emailgraph}
\end{figure}

We consider the clustering of $G_{84}$ consisting of two randomly selected vertices from each 
of the $42$ ground-truth communities. For the reference clustering with RWNSC we use the spectral 
embedding into the subspace of dimension $25$ and set $n_c = 42$. The same parameters 
were used for RVSC algorithm. For ROGLC we applied spectral embedding with the same subspace 
dimension and $n_t^\star = 170$ that corresponds to $n_g^\star=41$. The numbers of 
block-Lanczos steps for Algorithm~\ref{alg:twostagerom} were set at $k_1=10$ and $k_2=3$. 
Similarly to the previous example, for threshold $\epsilon=10^{-8}$ no deflation occurred, 
so the resulting Krylov subspace dimensions are $n_1 = 840$ and $n = 252$, respectively.

\begin{figure}[tbh]
\centering
\begin{tabular}{ccc}
RWNSC & RVSC & ROGLC \\
\hspace{-0.12in}
\includegraphics[width=0.30\textwidth]{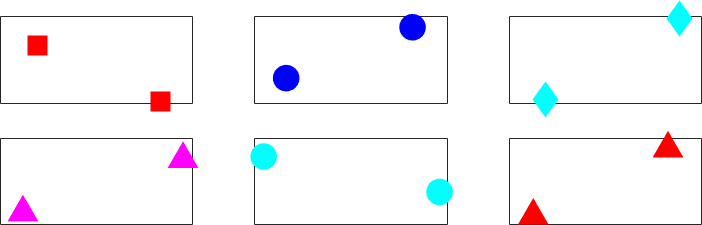} &  \hspace{0.001in}
\includegraphics[width=0.30\textwidth]{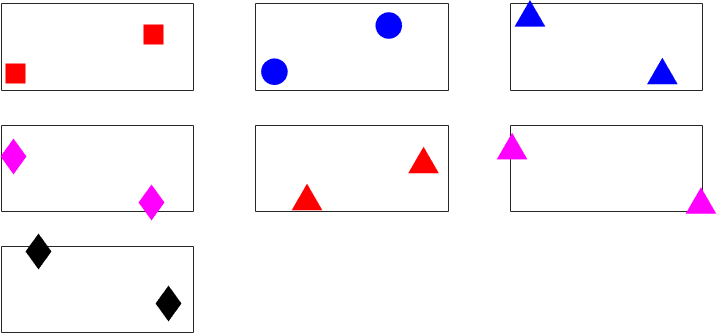}&   \hspace{0.001in}
\includegraphics[width=0.30\textwidth]{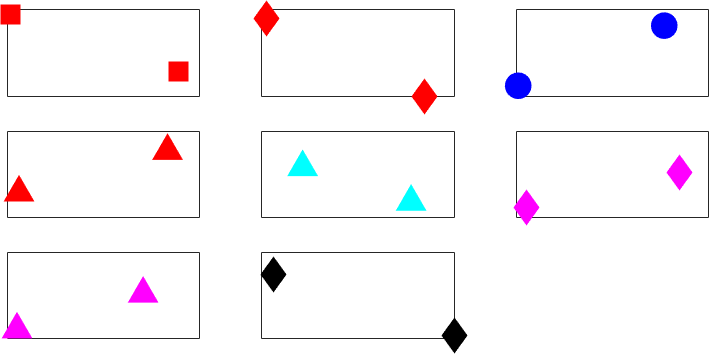}
\end{tabular}
\vspace{-0.12in}
\caption{Clusters of $G_{84}$ that are parts of some ground-truth communities: 
obtained using the full graph clustering RWNSC (left), as well as with RVSC (middle) and ROGLC 
(right) algorithms. Boxes represent different clusters. Markers of the same shape and 
color represent the vertices from the same ground-truth community. For this particular data subset, 
RWNSC clusters match the structure of $6$ ground-truth communities, while RVSC and ROGLC match 
$7$ and $8$ of them, respectively.}
\label{fig:email_res}
\end{figure}

The typical results of clustering comparing our algorithms to RWNSC are shown in Figure~\ref{fig:email_res}. 
We display the ground-truth communities in $G_{84}$ that were reproduced by the clustering algorithms, 
i.e., the corresponding vertices in $G_{84}$ were assigned to the same clusters without mixing with the 
vertices from other ground-truth communities. In this example our algorithms managed to recover slightly 
more communities than RWNSC. We ran multiple realizations with different choices of $G_{84}$, and in 
approximately half of the cases our algorithms recover the same number of communities as RWNSC, 
while in the remaining half of the cases RVSC and ROGLC recover slightly more communities, 
with the latter being the best performer. We can speculate that better performance is achieved due to 
the smaller size of data set that allows us to exploit Lloyd's algorithm in the regime where it is most 
robust and efficient.

Dimensionality reduction not only makes Lloyd's algorithm more robust, but it also allows to apply 
more accurate approximations to k-means like those based on semi-definite programming (SDP).
These algorithms typically provide more accurate clustering results, but become 
infeasible for large data sets due to the prohibitively high computational cost. In our next example 
we show the benefits of replacing Lloyd's method by SDP \cite{sdp} in ROGLC.  We should stress, 
that even for ROGLC from the previous example the SDP is still prohibitively expensive, 
therefore we had to reduce the size of $G_m$ from $m=84$ to $m=20$.
We consider $G_{20}$ consisting of two randomly selected vertices from $10$ randomly chosen 
ground-truth communities. Our reference clustering with RWNSC was the same as in the previous 
example. For ROGLC we used the numbers of block-Lanczos steps $k_1=10$ and $k_2=3$ 
which resulted in Krylov subspaces of dimension $n_1=100$ and $n=60$, respectively. 
The number of clusters was set to $n_t^\star=32$.

\begin{figure}[t]
\centering
\begin{tabular}{ccc}
RWNSC & ROGLC-L & ROGLC-S \\
\hspace{-0.12in}
\includegraphics[width=0.18\textwidth]{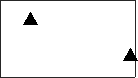} &  \hspace{0.001in}
\includegraphics[width=0.34\textwidth]{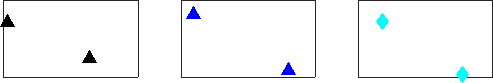}&   \hspace{0.001in}
\includegraphics[width=0.34\textwidth]{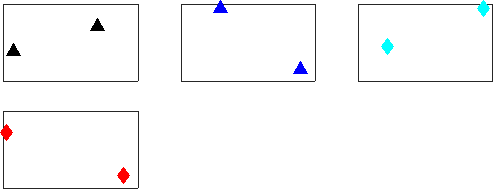}
\end{tabular}
\vspace{-0.02in}
\caption{Clusters of $G_{20}$ that are parts of some ground-truth communities: 
obtained using the full graph clustering RWNSC (left), as well as with ROGLC with Lloyd's algorithm 
(ROGLC-L, middle) and ROGLC with SDP (ROGLC-S, right). 
Boxes represent different clusters. Markers of the same shape and 
color represent the vertices from the same ground-truth community. For this particular data subset, 
RWNSC clusters match the structure of a single ground-truth community, while ROGLC-L 
matches $3$ of them. Employing ROGLC-S allowed to recover one more community, i.e., a total of $4$.}
\label{fig:email_res_10cls}
\end{figure}

\begin{figure}[t]
\centering
\includegraphics[scale=0.5]{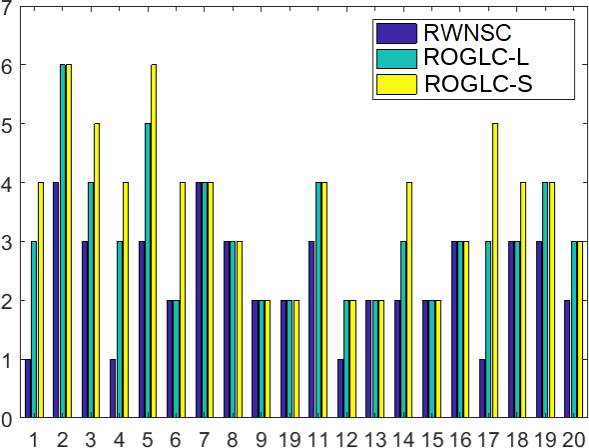} 
\caption{Comparison of full graph spectral clustering (RWNSC), ROGL clustering with 
Lloyd's algorithm (ROGLC-L) and ROGL clustering with SDP algorithm (ROGLC-S). 
Horizontal axis: random realizations of $G_{20}$;
vertical axis: number of correctly identified communities (out of $10$),
observe that it increases monotonically from RWNSC to ROGLC-L and ROGLC-S.}
\label{fig:stats-roglc}
\end{figure}

In Figure~\ref{fig:email_res_10cls} we compare the typical results of clusterings using RWNSC 
and two variants of ROGLC, with Lloyd's algorithm (ROGLC-L) and with SDP (ROGLC-S). 
Similarly to the previous example, ROGLC-L reproduced more communities compared to conventional RWNSC 
($3$ against $1$). Moreover, replacing Lloyd's algorithm with SDP allowed us to recover 
one more community. As mentioned above, the size of the target subset was reduced to 
$m=20$ that was the maximum for which applying the SDP algorithm was feasible due to 
the rapidly increasing computational cost. We ran multiple experiments with different random 
realizations of $G_{20}$ and observed that ROGLC-S always performed as well or better 
than ROGLC-L in terms of the number of correctly identified ground-truth communities, 
while the latter showed similar improvement compared to RWNSC.

Comparison results for multiple realizations of $G_{20}$ are shown in Figure~\ref{fig:stats-roglc}.
This result reinforces our observation above that ROGLC-S always performs as well or better 
compared to both RWNSC and ROGLC-L in terms of the number of correctly identified
communities. 
Most likely, this is due to the increased robustness of ROGLC algorithms with
respect to the noise in the data. As mentioned in the beginning of the section, ``email-Eu-core''
data set is rather noisy, which presents challenges to RWNSC that are overcome by both 
ROGLC variants.

%==================================================================================================
\subsection{Astro Physics collaboration network}
\label{subsection:astro}

Following \cite{Damle2016RobustAE}, in the third scenario we consider ``ca-AstroPh'' network 
(available at \cite{snapnets}) representing collaborations between papers submitted to Astro Physics 
category of the e-print arXiv. This is our largest example graph with $N = 18772$ vertices and $289$ 
connected components. The relatively large graph size presents multiple challenges for clustering algorithms. 

It is crucial to ensure that different connected components are separated into different clusters. 
As was shown in \cite{Damle2016RobustAE} on the same data set, the conventional RWNSC algorithm 
does not guarantee that. A necessary condition for RWNSC algorithm to achieve this objective is to
compute the entire nullspace of the graph-Laplacian. In contrast, for RVSC and ROGLC clustering of 
the target subset $G_m$ we only need to take into account the connected components of the graph 
that have nonempty intersections with $G_m$. Hence, for small $m$ we only need to compute a small 
subspace of $\cN(\bL)$. Since the eigenmode computation in RWNSC is typically performed using Krylov 
subspaces similar to (\ref{eqn:krylov1}), RWNSC would require a significantly larger Krylov subspace 
compared to our algorithms.

Since the ground-truth clustering is not available for this scenario, we modify our testing
methodology from the one used in the previous two. 

First, we cluster the full graph into $n_c = 500$ clusters using RWNSC with the embedding 
(spectral) subspace of dimension $400$, which becomes our reference clustering. 
Here Lloyd's algorithm takes $1152$ seconds. In order to avoid difficulties in finding the connected 
components (as reported in \cite{Damle2016RobustAE}) we made a special choice of the initial 
guess for the k-means algorithm for RWNSC. 

Next, we chose $G_{18}$ of two random vertices per each of some randomly chosen $9$ 
reference clusters. The goal of this benchmark is to check whether our algorithms can 
group together the vertices of $G_{18}$ corresponding to the same reference clusters.
For both our algorithms we used spectral embedding subspace of dimension $n_0=10$. 
We took $k_1=80$ and $k_2=10$ block-Lanczos steps in Algorithm~\ref{alg:twostagerom}. 

In this example some connected components are significantly smaller than others. 
This results in early deflation of Krylov subspace (\ref{eqn:krylov1}) if $G_m$ contains vertices 
from small connected components. In particular, for the random realization of $G_{18}$ reported 
here, the resulting Krylov subspaces have dimensions $n_1=660$ and $n=146$, compared to, 
respectively, $m k_1 = 1440$ and $m k_2 = 180$.

\begin{figure}[tbh]
\centering
\begin{tabular}{cc}
RVSC & ROGLC \\
\hspace{-0.12in}
\includegraphics[width=0.40\textwidth]{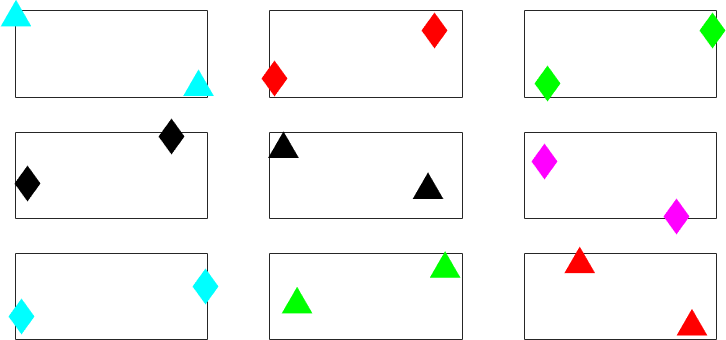} & \hspace{0.18in}
\includegraphics[width=0.40\textwidth]{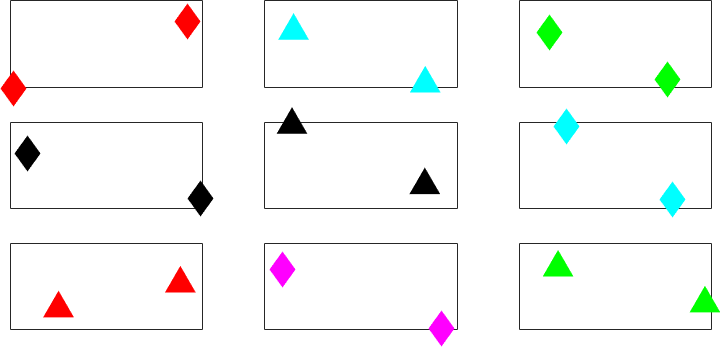}
\end{tabular}
\vspace{-0.12in}
\caption{Clusters of $G_{18}$ obtained using RVSC (left) and ROGLC (right) algorithms. 
Markers of the same shape and color represent vertices from the same reference clusters. 
Both RVSC and ROGLC successfully reproduced the reference clustering of $G_{18}$.}
\label{fig:astro_res}
\end{figure}

\begin{figure}[tbh]
\centering
\begin{tabular}{cc}
RVSC & ROGLC \\
\hspace{-0.12in}
\includegraphics[width=0.40\textwidth]{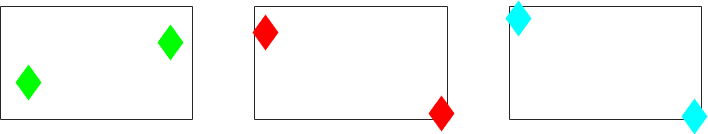} & \hspace{0.18in}
\includegraphics[width=0.40\textwidth]{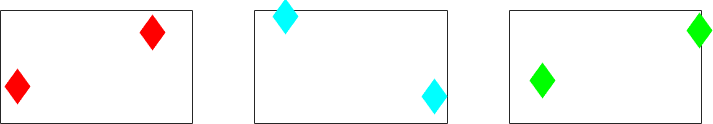}
\end{tabular}
\vspace{-0.12in}
\caption{Clusters of $G_{6}$ obtained using RVSC (left) and ROGLC (right) algorithms. 
Markers of the same shape and color represent vertices from the same reference clusters. 
Both RVSC and ROGLC successfully reproduced the reference clustering of $G_{6}$.}
\label{fig:astro_res_smallsbspc}
\end{figure}

In Figure~\ref{fig:astro_res} we display the clusters of $G_{18}$ computed using RVSC and ROGLC 
with $n_c = 9$, for which the latter produced $n_t^\star = 81$ and $n_g^\star = n_c$. We observe that 
both our algorithms have successfully reproduced all the reference clusters of $G_{18}$. Due to small 
$n_c$ in RVSC and ROGLC, Lloyd's algorithm took less than a second, compared to $1152$ seconds 
(with the same kmeans++ implementation) \cite{kmeanspp} reported above for full RWNSC clustering, 
which represents a significant computational speed-up.

Note that in this example the dimension of Krylov subspace at the first stage of 
Algorithm \ref{alg:twostagerom} is $n_1=660$. In contrast, RWNSC with ARPACK eigensolver 
\cite{lehoucq1998arpack} requires a subspace of dimension $1600$. Moreover, for smaller $m$ the 
difference between the sizes of Krylov subspaces becomes even more pronounced. For example, 
for $G_6$ of two random vertices from each of the three randomly chosen reference clusters, 
it suffices to take $n_1 = 249$ to reproduce the reference clustering, as shown in 
Figure~\ref{fig:astro_res_smallsbspc}. Algorithm~\ref{alg:twostagerom} is based on prototype 
MATLAB implementation of deflated block-Lanczos algorithm described in Appendix \ref{app:dblanczos}, 
so we can not directly compare its computational time with the highly optimized ARPACK eigensolver, 
however, it is prudent to expect that after proper implementation of the former, the speedup due to 
the reduction of the Krylov subspace dimension will be at least close to the dimension reduction factor.

%==================================================================================================
\section{Summary and future work}
\label{sec:summary}

We have developed a low-dimensional embedding of a large-scale graph such that it preserves 
important global distances between vertices in an a priori chosen target subset. We presented two 
algorithms to parametrize the reduced-order graph based on Krylov subspace model order reduction. 
Both algorithms use a two-stage ROM construction procedure consisting of two block-Lanczos processes 
that compute a reduced model for the graph-Laplacian that approximates the late-time diffusion transfer 
function on the target subset. To preserve global distances between the vertices in the target subset, 
the embedding must be consistent with the overall graph structure. Hence, both algorithms need to sample 
the degrees of freedom corresponding to the vertices outside the target subset. While the first algorithm, 
``Ritz vectors sampling clustering'' (RVSC), achieves this by sampling the Ritz vectors of the ROM, 
the second one, ``reduced order graph-Laplacian clustering'' (ROGLC), transforms the ROM further to a 
form resembling a graph-Laplacian and then samples its eigenvectors.

We applied our graph embedding to target subset clustering problem. The performance of both proposed 
approaches is verified numerically on one synthetic and two real data sets against the conventional 
random-walk normalized graph Laplacian spectral clustering (RWNSC) of the full graph. 
For these scenarios our algorithms show results fully consistent with the RWNSC. In the example with 
real data containing multiple outliers (isolated sets of vertices), our algorithms produce better results 
on the target subset than RWNSC by detecting more clusters corresponding to the known ground-truth 
communities. For all the cases our algorithms show clear computational advantages, such as reduction 
of the dimension of Krylov subspaces compared to partial eigensolvers used in RWNSC.

Our graph embedding algorithm can be used standalone when the task is focused to some representative 
``skeleton'' subset and handling the full graph is not required. However, a more important application
is to use them as building blocks of multi-level divide-and-conquer type method for the full graph clustering. 
We believe that such approach has multiple advantages. First, if graph vertices are split into 
disjoint target subsets, each of them can be processed independently thus making this step 
perfectly parallelizable. 

Second, reducing dimensionality of the graph allows to exploit conventional algorithms in their 
most efficient regimes. In particular, when approximate k-means algorithms for clustering 
are applied only to the target subsets individually, we avoid the problems of their reduced robustness 
with respect to the initial guess for large data sets. Once the individual subset clusters are computed, 
one needs to merge them to obtain the clustering of the full graph. Here lies another advantage, 
since the merging step provides for greater flexibility and the possibility to control the quality of the 
clustering. The development of a multi-level divide-and-conquer method is a topic of ongoing research 
and will be reported in a separate paper. 

Third, the small size and special structure of the ROGL opens potential opportunities to substitute 
conventional approximate k-means algorithms by more accurate and expensive ones. 
For example, in clustering problems instead of using Lloyd's algorithm one may empoy SDP 
relaxations \cite{Damle2016RobustAE,peng2007approximating,7421303} that produce better 
results in certain cases, but do not scale well with the size of the problem. 

Finally, while the focus of this work is spectral clustering, the ROGL developed here has other 
potential uses. For example, one may look into adapting the min/max cut clustering algorithms 
(e.g., \cite{johnson1993min, ding2001min, flake2004graph}) to graphs with edge weights of 
arbitrary signs, like those produced by ROGL construction. Another example could be the direct 
solution of the NP-hard problem of modularity optimization \cite{Newman2016physrev} 
infeasible for large graphs, but computationally tractable for small target vertex subsets.

%==================================================================================================
\section*{Acknowledgments}
\label{sec:acknowledge}

This material is based upon research supported in part by the U.S. Office of Naval Research under 
award number N00014-17-1-2057 to Mamonov. Mamonov was also partially supported by the 
National Science Foundation Grant DMS-1619821. Druskin acknowledges support by 
Druskin Algorithms and by the Air Force Office of Scientific Research under award number FA9550-20-1-0079.

The authors thank Andrew Knyazev, Alexander Lukyanov, Cyrill Muratov and Eugene Neduv for useful discussions.

%==================================================================================================
\appendix

%==================================================================================================
\section{Two-stage model reduction algorithm}
\label{app:twostage}

Here we introduce an algorithm for computing the orthonormal basis 
for $\K_{k_2}[{\bA^\dagger},\bB]$ defined in Proposition~\ref{prop:LanczPade} 
and all the necessary quantities for the reduced order transfer function (\ref{eqn:pade}) and 
state-space embedding (\ref{eqn:state}) (e.g., satisfying Assumption~\ref{ass:null}) 
via projection of the normalized symmetric graph-Laplacian $\bA$ consecutively on two 
Krylov subspaces. This approach follows the methodology of \cite{druskin2017multiscale} 
for multiscale model reduction for the wave propagation problem. 

At the first stage we use the deflated block-Lanczos process, Algorithm~\ref{alg:DBL}, 
to compute an orthogonal matrix $\bQ_1 \in \RR^{N \times n_1}$, the columns of which span 
the block Krylov subspace
\begin{equation}
\K_{k_1}(\bA, \bB) = \mbox{colspan} \{\bB, \bA \bB, \ldots, \bA^{k_1-1} \bB\}.
\label{eqn:krylov1}
\end{equation}
After projecting onto ${\K}_{k_1}(\bA, \bB)$, the normalized symmetric graph-Laplacian takes the 
deflated block tridiagonal form
\begin{equation}
\bT_1 = \bQ_1^T \bA \bQ_1 \in \RR^{n_1 \times n_1},
\label{eqn:project1}
\end{equation}
as detailed in Appendix \ref{app:dblanczos}. Note that the input/output matrix is transformed
simply to
\begin{equation}
\bE_1 = \bQ_1^T \bB \in \RR^{n_1\times m}.
\label{eqn:Q1TB}
\end{equation}
Observe also that $n_1 = \mbox{dim}[ \K_{k_1}(\bA, \bB) ]$, the number of columns of $\bQ_1$, 
satisfies $n_1 \leq k_1 m$ with a strict inequality in case of deflation.

\begin{remark}
\label{rem:conncomp}
Since the input/output matrix $\bB$ is supported at the vertices in the target subset $G_m$, 
repeated applications of $\bA$ cannot propagate $\bB$ outside of the connected components of the 
graph that contain $G_m$. Therefore, the support of the columns of $\bQ_1$ is included in these 
connected components. 
As a result, projection (\ref{eqn:project1}) is only sensitive to the entries of $\bA$ 
corresponding to graph vertices that can be reached from $G_m$ with a path of at most $k_1-1$ steps. 
\end{remark}

The number of block-Lanczos steps $k_1$ is chosen to attain the desired accuracy of the approximation 
of $\bZ$ (projection of $\bB$ on nullspace of $\bA$)  and the requested lower eigenmodes via $\bQ_1$ 
(i.e., to satisfy Assumption~\ref{ass:null}), that also gives good approximation of the diffusion transfer 
function on the entire time interval.

While the first stage provides a certain level of graph-Laplacian compression, the approximation 
considerations presented above may lead to the number of block-Lanczos steps $k_1$ and the resulting 
subspace dimension $n_1$ to be relatively large. It corresponds to Pad\'e approximation of the transfer 
function $\hL{\bF}$ at $\lambda=\infty$ while we are interested in $\lambda=0$ as in 
(\ref{eqn:pade}) to obtain a good approximation in the lower part of spectrum. Therefore, 
our approach includes the second stage to compress the ROM even further. This is achieved by another 
application of the deflated block-Lanczos process to construct an approximation to $\bQ_{12}$ 
from Proposition~\ref{prop:LanczPade}. 

Let the columns of matrix $\bZ_1\in\RR^{n_1\times m_0}$ form an orthonormal basis for
the nullspace of ${\bT}_1$. Then we apply the deflated block-Lanczos Algorithm from 
Appendix~\ref{app:dblanczos} to compute an orthogonal matrix 
$\bQ_2 \in \RR^{n_1 \times n}$ such that 
\begin{equation}
\mbox{colspan}(\bQ_2) = \K_{k_2}[\bT_{1}^{-1},(\bI-\bZ_1\bZ_1^T)\bE_1],
\end{equation}
where compression is achieved by choosing $k_2 < k_1$. The total dimension $n$
satisfies (\ref{eqn:deflation}) with a strict inequality in case of deflation. 

For large enough $k_1$, matrices 
\begin{equation}
\bZ \approx \bQ_1\bZ_1, \quad  
\bQ \approx \bQ_1\bQ_2, \quad 
\bQ_{12} \approx [\bQ, \bZ]
\label{eqn:12}
\end{equation}
and
\begin{equation}
\tA_{12} = \bQ_{12}^T \bA \bQ_{12} \approx [\bQ_2, \bZ_1]^T \bT_1 [\bQ_2, \bZ_1]
\label{eqn:bigT},
\end{equation}
approximate their counterparts from Proposition~\ref{prop:LanczPade} and
thus we obtain $\tlambda_j$ and  $\ts_j\in\RR^n$  as  the eigenpairs of $\bT$ and  also 
\begin{equation}
\bw_j =\bQ_{12}\ts_j.
\label{eqn:bw}
\end{equation}

We summarize the model reduction algorithm below.
\begin{algorithm}[Two-stage model reduction]\\
\label{alg:twostagerom}
\textbf{Input:} normalized symmetric graph-Laplacian $\bA \in \RR^{N \times N}$, target subset $G_m$,
numbers of Lanczos steps $k_1$, $k_2$ for the first and second stage deflated block-Lanczos 
processes,  respectively, and the truncation tolerance $\varepsilon$. \\
\textbf{Output:} $n\le N$, $m_0\le m$, $\tA_{12}$, $\ts_j$ and  $\bw_j$ for $j=1,\ldots, n$.

\noindent
\textbf{Stage 1:} Form the input/output matrix $\bB$ (\ref{eqn:bmat}) and perform the deflated
block-Lanczos process with $k_1$ steps on $\bA$ and $\bB$ with truncation tolerance $\varepsilon$, 
as described in Appendix \ref{app:dblanczos}, 
to compute the orthonormal basis $\bQ_1 \in \RR^{N \times n_1}$ for the block Krylov subspace 
(\ref{eqn:krylov1}) and the deflated block tridiagonal matrix $\bT_1$. Compute $m_0$, $\bZ_1$.\\
\textbf{Stage 2:} Perform $k_2$ steps of the deflated block-Lanczos process using matrix 
$\bT_{1}^{-1}$ and initial vector $(\bI-\bZ_1\bZ_1^T)\bE_1$, with
$\bE_1 \in \RR^{n_1 \times m}$ and truncation tolerance $\varepsilon$, as described in Appendix 
\ref{app:dblanczos}, to compute $n$ and the orthogonal matrix basis $\bQ_2 \in \RR^{n_1 \times n}$.
Compute the remaining elements of the output using (\ref{eqn:12})--(\ref {eqn:bw}).
\end{algorithm}
\begin{remark}
Due to good compression properties of Krylov subspaces, $n \ll n_1 \ll N$, thus, the computational 
cost of Algorithm \ref{alg:twostagerom} is dominated by the first stage block-Lanczos process.
In turn, assuming that no deflation occurs and each column of $\bA$ has on average 
$M$ nonzero entries, the cost of the first stage is driven by matrix products of $\bA$ and the
blocks of $\bQ_1$ (containing $m$ columns each, see step~(2a) of Algorithm~\ref{alg:DBL}). 
Since $k_1$ such products are computed, the computational cost of the first stage and of the whole 
Algorithm \ref{alg:twostagerom} can be estimated as $O(k_1 M N m)$. Note that this analysis excludes
an expensive reorthogonalization step (2j) of Algorithm~\ref{alg:DBL} that we do not perform in
Stage 1 of  Algorithm~\ref{alg:twostagerom}, as mentioned in Appendix~\ref{app:dblanczos}.
\end{remark}

To illustrate the compression properties of both stages of Algorithm~\ref {alg:twostagerom}, 
we display in Figure~\ref{fig:lanc_conv} the error of the transfer function for both Lanczos processes
corresponding to the late, nullspace dominated, part of the diffusion curve for the Astro Physics 
collaboration network data set with $N=18872$ described in Section~\ref{subsection:astro} 
with $m=20$. For the first stage we plotted dependence of the error on $k_1$ for $k_2=15$.
The second stage was performed using $\bT_1$ and $k_1 = 30$ that adds $10^{-13}$ of relative error. 
Both curves exhibit superlinear (in logarithmic scale) convergence in agreement with the bounds 
of \cite{druskin1995krylov, druskin1989two}. Even without accounting for deflation, the first stage provides more 
than $30$-fold compression of the full graph, and due to much faster convergence of $\bF_2$, 
the second stage provides more than two-fold additional compression.

\begin{figure}[htb]
\centering
\includegraphics[scale=0.45]{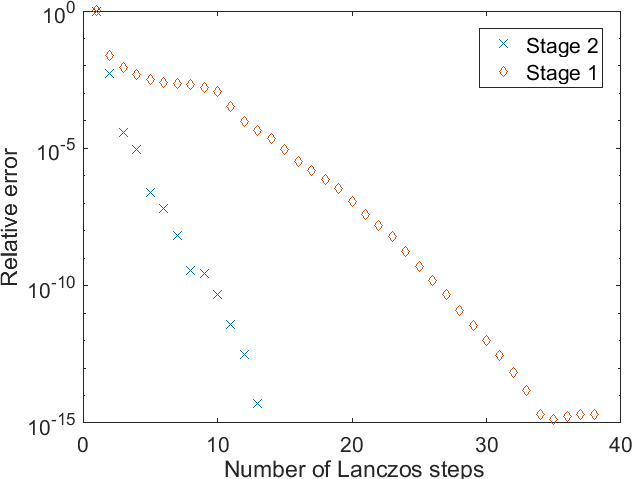} 
\caption{Relative errors of Stage 1 and Stage 2 deflated block-Lanczos processes 
in Algorithm~\ref{alg:twostagerom} versus the numbers of Lanczos steps $k_1$ and $k_2$, 
respectively, for AstroPhysics collaboration network data set.}
\label{fig:lanc_conv}
\end{figure}

The choice of parameters $k_1$ and $k_2$ depends strongly on the graph structure,  
and normally is made adaptively by a posteriori error control, e.g., by extrapolating error from three
consecutive iterations. For some scenarios we do not even need to perform Stage 2 of 
Algorithm \ref{alg:twostagerom}. For example, let us consider a family of graph Laplacians 
$\bL \in \RR^{N \times N}$ with random entries $L_{ij} \in \{0; -1\}, ~ i \ne j$, 
chosen with probability 
\begin{equation}
p \left( \{ L_{ij} = -1 \} \right) = 0.01, \quad i \neq j, \quad i,j = 1,\ldots,N,
\label{eqn:probLij}
\end{equation}
where $N=3000$, $6000$, $12000$, $24000$ and $48000$.

\begin{figure}[htb]
\centering
\includegraphics[scale=0.35]{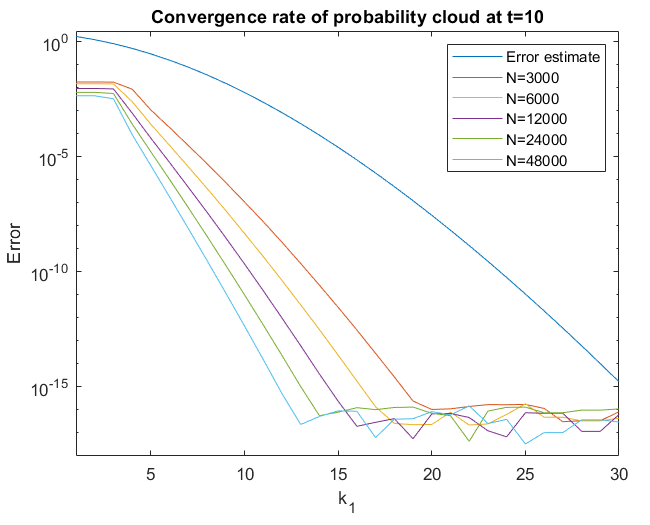} 
\includegraphics[scale=0.35]{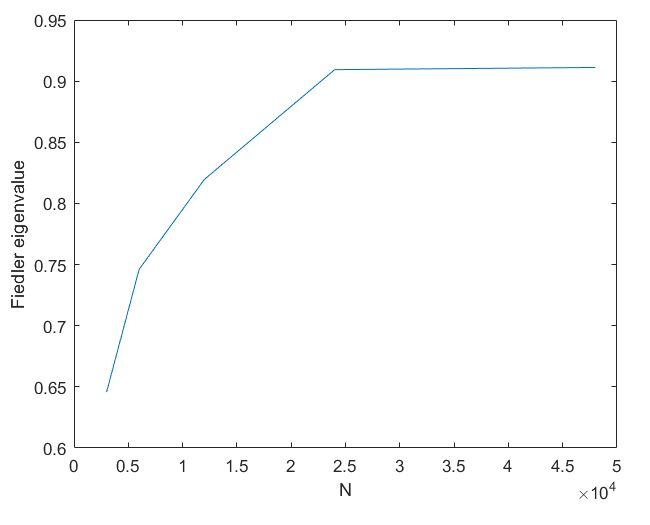} 
\caption{Absolute errors of Stage 1 Lanczos process in Algorithm~\ref{alg:twostagerom} versus 
the numbers of Lanczos steps $k_1$ for the family of random graph Laplacians (\ref{eqn:probLij}) (left). 
Fiedler eigenvalue for the family of random graph Laplacians (right).}
\label{fig:rand_graph_lanc_conv}
\end{figure}

We show in the left plot in Figure~\ref{fig:rand_graph_lanc_conv} how the $l_2$ error of global 
diffusion on the graph from a randomly chosen single node, a.k.a. probability cloud at some late time 
(here $t=10$) depends on $k_1$ at Stage 1 Lanczos process of Algorithm \ref{alg:twostagerom}. 
We also show a priori error bound obtained via Chebyshev series decomposition of the graph Laplacian
exponential \cite{druskin1989two}. This bound is uniform for all matrices with a given spectral interval 
(e.g., $[0,2]$ for normalized random walk graph-Laplacians), and it is tight for large matrices with spectrum 
densely distributed on the spectral interval.

Note that convergence of Lanczos process (as well as the error bound)
slows down monotonically with time, hence late times give the worse case scenario. 
As we observe in Figure~\ref{fig:rand_graph_lanc_conv}, the actual convergence rate is significantly 
faster than the bound, and for the family of graph Laplacians (\ref{eqn:probLij}) we do not benefit from 
Stage 2 of Algorithm \ref{alg:twostagerom}. Also, surprisingly, the error decays faster for larger 
graph Laplacians. This is caused by the increase of Fiedler eigenvalue with respect to the size of
graph Laplacian from the family (\ref{eqn:probLij}), as shown in the right plot in 
Figure~\ref{fig:rand_graph_lanc_conv}. Indeed, this eigenvalue determines the late time asymptotics 
of diffusion process on a graph, and the larger it is, the fewer steps Lanzcos process needs to converge.

%==================================================================================================

\section{Interpretation in terms of finite-difference Gaussian rules}
\label{app:interpret}

To connect the clustering approaches presented here to the so-called 
finite-dif\-fer\-ence Gaussian rules, a.k.a optimal grids 
\cite{druskin1999gaussian,druskin2000gaussian,ingerman2000optimal,
asvadurov2000application,asvadurov2002application,borcea2005continuum}, 
we view the random-walk normalized graph-Laplacian $\bL_{RW}$
as a finite-difference approximation of the positive-semidefinite elliptic operator 
\begin{equation}
{\cal L} u(x) = - \frac{1}{\sigma(x)} \nabla \cdot \left[ \sigma(x) \nabla u(x) \right],
\end{equation} 
on a grid uniform in some sense defined on the data manifold, e.g., see \cite{Belkin2003LaplacianEF}. 
Note that since we assume the grid to be ``uniform'', all the variability of the weights of $\bL_{RW}$
is absorbed into the coefficient $\sigma(x) > 0$.

For simplicity, following the setting of \cite{borcea2014model}, let us consider the single 
input/single output (SISO) 1D diffusion problem on $x \in [0,1]$, $t \in (0, \infty)$:
\begin{equation}
u_t(x, t) - \frac{1}{\sigma(x)} [\sigma(x) u_x(x, t)]_x = 0, \quad 
u(x, 0) = \delta(x), 
\quad u_x(0, t) = 0, \quad u_x(1, t) = 0,
\label{eqn:1Ddiff}
\end{equation}
with a regular enough $\sigma(x) > 0$, and the diffusion transfer function defined as
\begin{equation}
F(t) = u(0, t).
\label{eqn:1DF}
\end{equation}
Since both input and output are concentrated at $x = 0$, the 
``target set'' consists of a single ``vertex'' corresponding to $x = 0$. Therefore, it does not make
sense to talk about clustering, however, we can still use the SISO dynamical system 
(\ref{eqn:1Ddiff})--(\ref{eqn:1DF}) to give a geometric interpretation of the embedding properties
of our reduced model and to provide the reasoning for Assumption~\ref{ass:z0}. 

The ROM (\ref{eqn:T})--(\ref{eqn:Dtilde}) constructed for the system (\ref{eqn:1Ddiff})
transforms it into
\begin{equation}
\tbu_t - \tD^{-1} \tL \tbu = 0, \quad 
\tbu|_{t=0} = \tD^{-1} \be_1,
\label{eqn:1DROM}
\end{equation}
where $\tbu, \be_1 \in \RR^{n}$, $\tD, \tL \in \RR^{n \times n}$, $n = k_2$,
with $\tD = \mbox{diag} \{ \widehat{h}_1 \widehat{\sigma}_1, \ldots, \widehat{h}_{n} \widehat{\sigma}_{n} \}$,
and $\tL$ is the second order finite-difference operator defined by
\begin{equation}
[\tL \tbu]_{i} = 
\frac{\sigma_{i}}{h_{i}}(\tu_{i} - \tu_{i-1}) - 
\frac{\sigma_{i+1}}{h_{i+1}}(\tu_{i+1} - \tu_{i}),
\qquad i = 1,\ldots,n,
\label{eqn:1DFD}
\end{equation} 
with $\tu_0$ and $\tu_{n+1}$ defined to satisfy the discrete Neumann boundary conditions
\begin{equation}
\tu_0 = \tu_1, \quad \tu_{n} = \tu_{n+1}.
\label{eqn:1DBC}
\end{equation}

As we expect from Sections~\ref{app:dblanczos} and \ref{sub:romgraph}, $\tL$ is indeed a 
tridiagonal matrix. Parameters $h_i, \widehat{h}_i$, $i = 1,2,\ldots,n$ can be interpreted as  
the steps of a primary and dual grids of a staggered finite-difference scheme, respectively, 
whereas $\sigma_i, \widehat{\sigma}_i$ are respectively the values of $\sigma(x) > 0$ at the 
primary and dual grid points. Assumption~\ref{ass:z0} then follows from the positivity of primary 
and dual grid steps (a.k.a. Stieljes parameters) given by the Stieljes theorem 
\cite{druskin1999gaussian}. 

\begin{figure}[htb]
\centering
\includegraphics[scale=0.45]{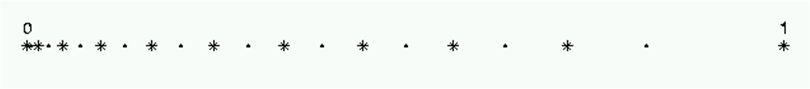} 
\caption{Finite-difference interpretation of ROM graph-Laplacian realization for the 
1D Laplace operator on $[0,1]$. Primary and dual grid nodes are dots and stars, respectively.}
\label{fig:grid}
\end{figure}

As an illustration, we display in Figure~\ref{fig:grid} the optimal grid with steps 
$h_i, \widehat{h}_i$, $i = 1,\ldots,10$ computed for $\sigma \equiv 1$. The continuum operator
${\cal L}$ is discretized on an equidistant grid on $[0,1]$ with $N = 100$ nodes, 
i.e., $\bL_{RW} \in \RR^{N \times N}$.
The optimal grid steps were computed using the gridding algorithm from \cite{druskin2016compressing}, 
which coincides with Algorithm~\ref{alg:twostagerom} and the subsequent transformations (\ref{eqn:Dtilde}) 
and (\ref{eqn:T}) for the 1D Laplacian operator. We observe that the grid is embedded in the domain $[0,1]$ 
and has a pronounced stretching away from the origin. We should point out, that such stretching is 
inconsistent with random-walk normalized graph-Laplacian formulation, employing uniform grids. 
Grid non-uniformity is the price to pay for the spectral convergence of the transfer function approximation. 
One can view Algorithm~\ref{alg:cluster1} as an embedding of the reduced-order graph back to the space 
of the original normalized random walk graph Laplacian $\bL_{RW}$. 
The randomized choice of sampling vertices  provides a uniform graph sampling. 

For the general MIMO problem, Proposition~\ref{prop:scale} yields a symmetric positive-semidefinite 
block-tridiagonal $\tL$ with zero sum of elements in every row. The classical definition of graph-Laplacians 
requires non-positivity of the off-diagonal entries, which may not hold for our $\tL$. However, it is known 
that operators with oscillating off-diagonal elements still allow for efficient clustering if the 
zero row sum condition remains valid \cite{Knyazev2017SignedLF}. Matrix $\tL$ still fits to a more 
general PDE setting, i.e., when the continuum Laplacian is discretized in an anisotropic media or 
using high order finite-difference schemes. Such schemes appear naturally when one wants to employ 
upscaling or grid coarsening in approximation of elliptic equations in multidimensional domains, 
which is how we can interpret the transformed ROM (\ref{eqn:T}). 

%==================================================================================================
\section{Deflated block-Lanczos tridiagonalization process for sym\-met\-ric matrices}
\label{app:dblanczos}

Let $\bM = \bM^T \in \RR^{n \times n}$ be a symmetric matrix and $\bC \in \RR^{n \times m}$ be a 
``tall'' ($n > m$) matrix with orthonormal columns: $\bC^T \bC = \bI_m$, where $\bI_m$ 
is the $m \times m$ identity matrix. 
The conventional block-Lanczos algorithm successively constructs an orthonormal basis, the columns of
the orthogonal matrix $\widetilde{\bQ} \in \RR^{n \times m k}$, for the block Krylov subspace
\begin{equation}
{\K}_k(\bM, \bC) = \mbox{colspan} \{\bC, \bM \bC, \bM^2 \bC, \ldots, \bM^{k-1} \bC \},
\end{equation}
such that
\begin{equation} 
\widetilde{\bT} = \widetilde{\bQ}^T  \bM \widetilde{\bQ}
\end{equation}
is block tridiagonal and the first $m$ columns of $\widetilde{\bQ}$ are equal to $\bC$.
The deflation procedure allows to truncate the obtained basis at each step.

\begin{algorithm}[Deflated block-Lanczos process]
\label{alg:DBL}

\noindent \textbf{Input:} Symmetric matrix $\bM = \bM^T \in \RR^{n \times n}$, 
a matrix $\bC \in \RR^{n \times m}$ of initial vectors with orthonormal columns, 
maximum number of Lanczos steps $k$ such that $m k \le n$, 
and truncation tolerance $\varepsilon$.\\
\textbf{Output:} Deflated block tridiagonal matrix $\widetilde{\bT}$, 
orthogonal matrix $\widetilde{\bQ}$.

\textbf{Steps of the algorithm:}
\vspace{0.05in}
\begin{enumerate}
\item Set $\widetilde{\bQ}_1 = \bC$, $\bbeta_1 = \bI_m$, $m_1 = m$.
\item For $j = 1, 2, \ldots, k$:
\begin{enumerate}
\item Compute $\bR_j := \bM \widetilde{\bQ}_j$.
\item Compute $\balpha_j := \widetilde{\bQ}_j^T \bR_j$.
\item Compute $\bR_j: = \bR_j - \widetilde{\bQ}_j \balpha_j$. 
\item If $j > 1$ then set $\bR_j := \bR_j - \widetilde{\bQ}_{j-1} \bbeta_{j}^T$. 
\item Perform the SVD of $\bR_j$: 
\begin{equation}
\bR_j = \bU \bSigma \bW^T,
\end{equation}
with orthogonal $\bU \in \RR^{n \times m_j}$, $\bW \in \RR^{m_j \times m_j}$,
and a diagonal matrix of singular values $\bSigma$.
\item Truncate $\bU, \bSigma, \bW$ by discarding the singular vectors corresponding 
to the singular values less than $\varepsilon$. 
Denote the truncated matrices by 
$\widetilde{\bU} \in \RR^{n \times m_{j+1}}$, 
$\widetilde{\bSigma} \in \RR^{m_{j+1} \times m_{j+1}}$, 
$\widetilde{\bW} \in \RR^{m_{j} \times m_{j+1}}$ , 
where $m_{j+1} \le m_j$ is the number of the remaining, non-truncated singular modes.
\item If $m_{j+1} = 0$ then exit the for loop.
\item Set $\widetilde{\bQ}_{j+1} := \widetilde{\bU} \in \RR^{n \times m_{j+1}} $.
\item \label{step:beta} Set 
$\bbeta_{j+1} := \widetilde{\bSigma} \widetilde{\bW}^T \in \RR^{m_{j+1} \times m_j} $.
\item \label{step:reorth} Perform reorthogonalization 
$\widetilde{\bQ}_{j+1} := \widetilde{\bQ}_{j+1} - 
\sum\limits_{i=1}^j \widetilde{\bQ}_i (\widetilde{\bQ}^T_i \widetilde{\bQ}_{j+1})$
if needed.
\end{enumerate}
\item endfor
\item Let $\widetilde{k}$ be the number of performed steps, set
\begin{equation}
\widetilde{\bQ} := 
\left[ \widetilde{\bQ}_1, \widetilde{\bQ}_2, \ldots, \widetilde{\bQ}_{\widetilde{k}} \right]
\in \RR^{n \times \widetilde{n}},
\end{equation}
where $\widetilde{n} = \sum\limits_{j=1}^{\widetilde{k}} m_j.$
\item Set 
\begin{equation}
\widetilde{\bT} = 
\begin{bmatrix}
\balpha_1 & \bbeta_2^T  &          &  &  \\
\bbeta_2  & \balpha_2 & \bbeta_3^T &  &  \\
          & \ddots    & \ddots   & \ddots &  \\
        &           & \bbeta_{\widetilde{k}-1} & \balpha_{\widetilde{k}-1} & \bbeta_{\widetilde{k}}^T \\
        &           &                & \bbeta_{\widetilde{k}}  & \balpha_{\widetilde{k}}
\end{bmatrix} \in \RR^{\widetilde{n} \times \widetilde{n}}.
\label{eqn:Ttilde}
\end{equation}
\end{enumerate}
\end{algorithm}

Note that step (\ref{step:reorth}) of Algorithm \ref{alg:DBL} is computationally 
expensive and is only needed for computations with finite precision. In practice,
it is infeasible to perform for large data sets in the first stage of Algorihtm~\ref{alg:twostagerom}.
However, in the second stage of Algorithm~\ref{alg:twostagerom} and also when performing 
the third block-Lanczos process in ROGL construction, we deal with relatively small matrices 
$\bT_1 \in \RR^{n_1 \times n_1}$, $\bT_2 \in \RR^{n \times n}$, $n_1,n \ll N$, so 
the reorthogonalization in step (\ref{step:reorth}) becomes computationally feasible.

Given that the bulk of the computational effort of ROGL construction via
Algorithm~\ref{alg:twostagerom} is spent in Algorithm~\ref{alg:DBL}, one may consider its
parallelization to boost the overall performance. The most straightforward way of doing so is 
to parallelize the matrix product computation in step (2a). For a system with $p$ processors,
one may store $N/p$ rows of $\bM$ on each processor, multiply these submatrices by 
$\widetilde{\bQ}_j$ in parallel and then communicate the results across the processors 
so that each one has access to its own copy of $\bR_j = \bM \widetilde{\bQ}_j$. 
All other operations of Algorithm~\ref{alg:DBL} can be performed locally at each processor
to avoid communicating anything else except for the rows of $\bR_j$.

%==================================================================================================
%\newpage
\section{Proof of Proposition \ref{prop:distpres}}
\label{app:distpres}

\begin{proof}
Similar to (\ref{eqn:diffdist}) and (\ref{eqn:ctdist}), 
for the vertex set $\tG$ of the reduced-order graph we have 
\begin{equation}
\left( D^p_{jk}(\tG) \right)^2 = 
\left( \sqrt{\widetilde{D}_{jj}} \be_j^T - \sqrt{\widetilde{D}_{kk}} \be_k^T \right)
( \bI - \bT_3)^{2p}
\left( \sqrt{\widetilde{D}_{jj}} \be_j - \sqrt{\widetilde{D}_{kk}} \be_k \right)
\label{eqn:diffdist_rom}
\end{equation}
and
\begin{equation}
C^2_{jk}(\tG) = 
\left( \frac{1}{\sqrt{\widetilde{D}_{jj}}} \be_j^T - \frac{1}{\sqrt{\widetilde{D}_{kk}}} \be_k^T \right)
\bT^\dagger_3
\left( \frac{1}{\sqrt{\widetilde{D}_{jj}}} \be_j - \frac{1}{\sqrt{\widetilde{D}_{kk}}} \be_k \right).
\label{eqn:ctdist_rom}
\end{equation}
Here in a slight abuse of notation we let $\be_j$ and $\be_k$ be unit vectors in $\RR^{n}$. 

Due to (\ref{eqn:diagmatch}), for $\bD$ given by (\ref{eqn:randwalkdiag}) we obtain
\begin{equation}
\left( D^p_{i_j, i_k}(G) \right)^2 - \left( D^p_{jk}(\tG) \right)^2 = 
L_{i_j, i_j} \Delta P^{2p}_{jj} + L_{i_k, i_k} \Delta P^{2p}_{kk} 
- 2 \sqrt{L_{i_j, i_j} L_{i_k, i_k}} \Delta P^{2p}_{jk}
\end{equation}
and
\begin{equation}
C^2_{i_j, i_k}(G) - C^2_{jk}(\tG) = 
\frac{1}{L_{i_j, i_j}} \Delta J_{jj} + \frac{1}{L_{i_k, i_k}} \Delta J_{kk}
- 2 \frac{1}{\sqrt{L_{i_j, i_j} L_{i_k, i_k}}} \Delta J_{jk}
\end{equation}
where 
\begin{equation}
\Delta P^{2p}_{jk} = 
\be_{i_j}^T ( \bI - \bA)^{2p} \be_{i_k} - \be_{j}^T ( \bI - \tA )^{2p} \be_{k}, \quad
j, k = 1,\ldots,m,
\end{equation} 
and 
\begin{equation}
\Delta J_{jk} = \be_{i_j}^T \bA^\dagger \be_{i_k} - \be_j^T \tA^\dagger \be_k, \quad
j, k = 1,\ldots,m,
\end{equation} 
are ROM errors of approximations of polynomials and the pseudo-inverse, respectively. 

Since the ROGL is obtained via a three-stage process, we make the analysis more explicit by splitting 
the errors into three parts corresponding to each stage
\begin{equation}
\Delta P^{2p}_{jk} = \Delta^1 P^{2p}_{jk} + \Delta^2 P^{2p}_{jk} + \Delta^3 P^{2p}_{jk},
\end{equation}
where 
\begin{align*}
\Delta^1 P^{2p}_{jk} & = \be_{i_j}^T( \bI - \bA)^{2p} \be_{i_k} - \be_j^T (\bI - \bT_1)^{2p} \be_k, \\
\Delta^2 P^{2p}_{jk} & = \be_j^T (\bI - \bT_1)^{2p} \be_k - \be_j^T (\bI - \tA_{12})^{2p} \be_k, \\
\Delta^3 P^{2p}_{jk} & = \be_j^T (\bI - \tA_{12})^{2p} \be_k - \be_j^T (\bI - \tA)^{2p} \be_k.
\end{align*}
Similarly, for the pseudo-inverse error we have
\begin{equation}
\Delta J_{jk} = \Delta^1 J_{jk} + \Delta^2 J_{jk} + \Delta^3 J_{jk},
\end{equation} 
where 
\begin{align*}
\Delta^1 J_{jk} & = \be_{i_j}^T \bA^\dagger \be_{i_k} - \be_j^T \bT^\dagger_1 \be_k, \\
\Delta^2 J_{jk} & = \be_j^T \bT^\dagger_1 \be_k - \be_j^T \tA^\dagger_{12} \be_k, \\
\Delta^3 J_{jk} & = \be_j^T \tA^\dagger_{12} \be_k - \be_j^T \tA^\dagger \be_k.
\end{align*}

We note that $\Delta^3 P^{2p}_{jk} = \Delta^3 J_{jk} = 0$ since the transformation from 
$\tA_{12}$ to $\tA$ is unitary. To finalize the proof, we refer to the known results in the 
theory of model reduction via Krylov subspace projection \cite{druskin1995krylov}. In particular, 
$\Delta^2 P^{2p}_{jk} \rightarrow 0$ and $\Delta^1 J_{jk} \rightarrow 0$ exponentially
in $n$. Also, $\Delta^2 J_{jk}=0$ for $k_2>1$ and $\Delta^1 P^{2p}_{jk} = 0$ for $k_1 \ge p$.
\end{proof}

It follows from the proof that if the second stage of Algorithm~\ref{alg:twostagerom} is exact 
then $D^p_{i_j, i_k}(G) = D^p_{jk} (\tG)$ for $k_1 \ge p$ (see \cite{druskin1995krylov}).

%\newpage
%\bibliographystyle{siamplain}
\bibliography{graphbib}

\begin{thebibliography}{10}

\bibitem{Antoulas01asurvey}
A.~C. Antoulas, D.~C. Sorensen, and S.~Gugercin.
\newblock A survey of model reduction methods for large-scale systems.
\newblock {\em Contemporary Mathematics}, 280:193--219, 2001.

\bibitem{doi:10.1093/imanum/drx029}
Mario Arioli and Michele Benzi.
\newblock A finite element method for quantum graphs.
\newblock {\em IMA Journal of Numerical Analysis}, 38(3):1119--1163, 2018.

\bibitem{kmeanspp}
David Arthur and Sergei Vassilvitskii.
\newblock K-means++: The advantages of careful seeding.
\newblock In {\em Proceedings of the Eighteenth Annual ACM-SIAM Symposium on
  Discrete Algorithms}, SODA '07, pages 1027--1035, Philadelphia, PA, USA,
  2007. Society for Industrial and Applied Mathematics.

\bibitem{asvadurov2000application}
S.~Asvadurov, V.~Druskin, and L.~Knizhnerman.
\newblock {Application of the difference Gaussian rules to solution of
  hyperbolic problems}.
\newblock {\em Journal of Computational Physics}, 158(1):116--135, 2000.

\bibitem{asvadurov2002application}
S.~Asvadurov, V.~Druskin, and L.~Knizhnerman.
\newblock {Application of the difference Gaussian rules to solution of
  hyperbolic problems: II. Global expansion}.
\newblock {\em Journal of Computational Physics}, 175(1):24--49, 2002.

\bibitem{BAI20029}
Zhaojun Bai.
\newblock Krylov subspace techniques for reduced-order modeling of large-scale
  dynamical systems.
\newblock {\em Applied Numerical Mathematics}, 43(1):9 -- 44, 2002.
\newblock 19th Dundee Biennial Conference on Numerical Analysis.

\bibitem{Baker1996PadAS}
George~A. Baker and Peter~R. Graves-Morris.
\newblock Pad{\'e} approximants second edition.
\newblock 1996.

\bibitem{BEATTIE20155}
Christopher~A. Beattie, Zlatko Drmač, and Serkan Gugercin.
\newblock {Quadrature-based IRKA for optimal H2 model reduction}.
\newblock {\em IFAC-PapersOnLine}, 48(1):5 -- 6, 2015.
\newblock 8th Vienna International Conferenceon Mathematical Modelling.

\bibitem{Belkin2003LaplacianEF}
Mikhail Belkin and Partha Niyogi.
\newblock Laplacian eigenmaps for dimensionality reduction and data
  representation.
\newblock {\em Neural Computation}, 15:1373--1396, 2003.

\bibitem{Belkin2008TowardsAT}
Mikhail Belkin and Partha Niyogi.
\newblock Towards a theoretical foundation for laplacian-based manifold
  methods.
\newblock {\em J. Comput. Syst. Sci.}, 74:1289--1308, 2008.

\bibitem{borcea2014model}
L.~Borcea, .~Druskin, A.~Mamonov, and M.~Zaslavsky.
\newblock A model reduction approach to numerical inversion for a parabolic
  partial differential equation.
\newblock {\em Inverse Problems}, 30(12):125011, 2014.

\bibitem{borcea2005continuum}
L.~Borcea, V.~Druskin, and Leonid Knizhnerman.
\newblock On the continuum limit of a discrete inverse spectral problem on
  optimal finite difference grids.
\newblock {\em Communications on pure and applied mathematics},
  58(9):1231--1279, 2005.

\bibitem{Cheng2016GraphSM}
Xiaodong Cheng, Yu~Kawano, and Jacquelien M.~A. Scherpen.
\newblock Graph structure-preserving model reduction of linear network systems.
\newblock {\em 2016 European Control Conference (ECC)}, pages 1970--1975, 2016.

\bibitem{Chung:1997}
F.~R.~K. Chung.
\newblock {\em Spectral Graph Theory}.
\newblock American Mathematical Society, 1997.

\bibitem{Coifman_Lafon_2006}
Ronald~R. Coifman and Stéphane Lafon.
\newblock Diffusion maps.
\newblock {\em Applied and Computational Harmonic Analysis}, 21(1):5 -- 30,
  2006.
\newblock Special Issue: Diffusion Maps and Wavelets.

\bibitem{Damle2016RobustAE}
Anil Damle, Victor Minden, and Lexing Ying.
\newblock Robust and efficient multi-way spectral clustering.
\newblock {\em CoRR}, abs/1609.08251, 2016.

\bibitem{ding2001min}
Chris~HQ Ding, Xiaofeng He, Hongyuan Zha, Ming Gu, and Horst~D Simon.
\newblock A min-max cut algorithm for graph partitioning and data clustering.
\newblock In {\em Data Mining, 2001. ICDM 2001, Proceedings IEEE International
  Conference on}, pages 107--114. IEEE, 2001.

\bibitem{Dirac_1942}
P.~A.~M. {Dirac}.
\newblock {Bakerian Lecture. The Physical Interpretation of Quantum Mechanics}.
\newblock {\em Proceedings of the Royal Society of London Series A}, 180:1--40,
  March 1942.

\bibitem{druskin2016compressing}
V.~Druskin, S.~G{\"u}ttel, and L.~Knizhnerman.
\newblock Compressing variable-coefficient exterior helmholtz problems via
  rkfit.
\newblock {\em Manchester Institute for Mathematical Sciences, University of
  Manchester}, 2016.

\bibitem{druskin1989two}
V.~Druskin and L.~Knizhnerman.
\newblock Two polynomial methods of calculating functions of symmetric
  matrices.
\newblock {\em USSR Computational Mathematics and Mathematical Physics},
  29(6):112--121, 1989.

\bibitem{druskin1995krylov}
V.~Druskin and L.~Knizhnerman.
\newblock Krylov subspace approximation of eigenpairs and matrix functions in
  exact and computer arithmetic.
\newblock {\em Numerical linear algebra with applications}, 2(3):205--217,
  1995.

\bibitem{druskin1999gaussian}
V.~Druskin and L.~Knizhnerman.
\newblock Gaussian spectral rules for the three-point second differences: I. a
  two-point positive definite problem in a semi-infinite domain.
\newblock {\em SIAM Journal on Numerical Analysis}, 37(2):403--422, 1999.

\bibitem{druskin2000gaussian}
V.~Druskin and L.~Knizhnerman.
\newblock Gaussian spectral rules for second order finite-difference schemes.
\newblock {\em Numerical Algorithms}, 25(1-4):139--159, 2000.

\bibitem{druskin2017multiscale}
V.~Druskin, A.~Mamonov, and M.~Zaslavsky.
\newblock Multiscale s-fraction reduced-order models for massive wavefield
  simulations.
\newblock {\em Multiscale Modeling \& Simulation}, 15(1):445--475, 2017.

\bibitem{druskin2014adaptive}
V.~Druskin, V.~Simoncini, and M.~Zaslavsky.
\newblock Adaptive tangential interpolation in rational krylov subspaces for
  mimo dynamical systems.
\newblock {\em SIAM Journal on Matrix Analysis and Applications},
  35(2):476--498, 2014.

\bibitem{dyukarev2004indeterminacy}
Yu~M Dyukarev.
\newblock Indeterminacy criteria for the stieltjes matrix moment problem.
\newblock {\em Mathematical Notes}, 75(1-2):66--82, 2004.

\bibitem{Saad2018}
L.~Fan, D.I. Shuman, S.~Ubaru, and Y.~Saad.
\newblock Spectrum-adapted polynomial approximation for matrix functions.
\newblock {\em arXiv preprint arXiv:1808.09506v}, 2018.

\bibitem{flake2004graph}
Gary~William Flake, Robert~E Tarjan, and Kostas Tsioutsiouliklis.
\newblock Graph clustering and minimum cut trees.
\newblock {\em Internet Mathematics}, 1(4):385--408, 2004.

\bibitem{ingerman2000optimal}
D.~Ingerman, V.~Druskin, and L.~Knizhnerman.
\newblock Optimal finite difference grids and rational approximations of the
  square root i. elliptic problems.
\newblock {\em Communications on Pure and Applied Mathematics},
  53(8):1039--1066, 2000.

\bibitem{johnson1993min}
Ellis~L Johnson, Anuj Mehrotra, and George~L Nemhauser.
\newblock Min-cut clustering.
\newblock {\em Mathematical programming}, 62(1-3):133--151, 1993.

\bibitem{Knyazev2017SignedLF}
Andrew~V. Knyazev.
\newblock Signed laplacian for spectral clustering revisited.
\newblock {\em CoRR}, abs/1701.01394, 2017.

\bibitem{lehoucq1998arpack}
R.~B. Lehoucq, D.~C. Sorensen, and C.~Yang.
\newblock Arpack users guide: Solution of large-scale eigenvalue problems with
  implicitly restarted arnoldi methods, 1998.

\bibitem{snapnets}
Jure Leskovec and Andrej Krevl.
\newblock {SNAP Datasets}: {Stanford} large network dataset collection.
\newblock \url{http://snap.stanford.edu/data}, June 2014.

\bibitem{Newman2016physrev}
M.~E.~J. Newman.
\newblock Equivalence between modularity optimization and maximum likelihood
  methods for community detection.
\newblock {\em Phys. Rev. E}, 94(5), Nov 2016.

\bibitem{Ng2001OnSC}
Andrew~Y. Ng, Michael~I. Jordan, and Yair Weiss.
\newblock On spectral clustering: Analysis and an algorithm.
\newblock In {\em NIPS}, 2001.

\bibitem{sdp}
J.~Peng and Y.~Wei.
\newblock Approximating k‐means‐type clustering via semidefinite
  programming.
\newblock {\em SIAM Journal on Optimization}, 18(1):186--205, 2007.

\bibitem{peng2007approximating}
Jiming Peng and Yu~Wei.
\newblock Approximating k-means-type clustering via semidefinite programming.
\newblock {\em SIAM Journal on Optimization}, 18(1):186--205, 2007.

\bibitem{Reichel_2016}
Lothar Reichel, Giuseppe Rodriguez, and Tunan Tang.
\newblock New block quadrature rules for the approximation of matrix functions.
\newblock {\em Linear Algebra and its Applications}, 502:299--326, aug 2016.

\bibitem{Shi1997NormalizedCA}
Jianbo Shi and Jitendra Malik.
\newblock Normalized cuts and image segmentation.
\newblock In {\em CVPR}, 1997.

\bibitem{2017-ecml-pkdd}
Pan Shi, Kun He, David Bindel, and John Hopcroft.
\newblock Local lanczos spectral approximation for community detection.
\newblock In {\em Proceedings of ECML-PKDD}, September 2017.

\bibitem{vonLuxburg2007}
Ulrike von Luxburg.
\newblock A tutorial on spectral clustering.
\newblock {\em Statistics and Computing}, 17(4):395--416, Dec 2007.

\bibitem{7421303}
Y.~Wu, J.~Xu, and B.~Hajek.
\newblock Achieving exact cluster recovery threshold via semidefinite
  programming under the stochastic block model.
\newblock In {\em 2015 49th Asilomar Conference on Signals, Systems and
  Computers}, pages 1070--1074, Nov 2015.

\bibitem{2015-edgeppr}
Wenlei Xie, David Bindel, Alan Demers, and Johannes Gehrke.
\newblock Edge-weighted personalized {PageRank}: Breaking a decade-old
  performance barrier.
\newblock In {\em Proceedings of ACM KDD 2015}, August 2015.

\end{thebibliography}

\end{document}